\documentclass{article} 
\usepackage{iclr2019_conference,times}
\iclrfinalcopy

\usepackage[utf8]{inputenc} 
\usepackage[T1]{fontenc}    
\usepackage{nicefrac}       
\usepackage{bigints}
\usepackage[export]{adjustbox}
\usepackage{microtype}
\usepackage{graphicx}
\usepackage{booktabs} 
\usepackage{amsmath}
\usepackage{units} \usepackage{wrapfig} \usepackage{enumitem}
\usepackage{float} \usepackage{lettrine} \usepackage{amsfonts}
\usepackage{amsmath} \usepackage{amsthm} \usepackage{framed}
\usepackage{tikz} \usepackage{amsopn} \usepackage{bm}
\usepackage{amssymb} \usepackage{epstopdf} \usepackage{dsfont} \usepackage{tablefootnote}
\usepackage{stackengine}
\usepackage{titlesec}
\usepackage{caption} 
\captionsetup[table]{skip=6pt}
\usepackage[hidelinks]{hyperref}
\usepackage{multicol}
\usepackage{hhline,comment}

\usepackage[labelformat=simple]{subcaption}

\newtheorem{lemma}{Lemma}
\theoremstyle{definition}
\newtheorem{criterion}{Criterion}
\newtheorem{claim}{Claim}

\newcommand{\norm}[1]{\left\lVert #1 \right\rVert}
\newcommand{\specialcell}[2][c]{%
  \begin{tabular}[#1]{@{}c@{}}#2\end{tabular}}

\title{Per-Tensor Fixed-Point Quantization of the Back-Propagation Algorithm}


\author{Charbel Sakr \& Naresh Shanbhag \\
Department of Electrical and Computer Engineering\\
University of Illinois at Urbana-Champaign\\
Illinois, IL 61801, USA \\
\texttt{\{sakr2,shanbhag\}@illinois.edu} 
}

%

\begin{document}

\maketitle\vspace*{-0.cm}
\begin{abstract}\vspace*{-0.cm}
The high computational and parameter complexity of 
neural networks makes their training very slow and difficult to deploy on energy and storage-constrained computing systems.
Many network complexity reduction techniques have been proposed including fixed-point implementation.
However, a systematic approach for designing full fixed-point training and inference
of deep neural networks remains elusive. We describe a precision assignment methodology for neural 
network training in which all network parameters, i.e., activations and weights in the
feedforward path, gradients and weight accumulators in the feedback path, are assigned close to 
minimal precision. The precision assignment is derived analytically and enables tracking the convergence behavior of the full precision training, known to converge a priori. Thus, our work leads to a systematic methodology of determining suitable precision for fixed-point training. The near optimality (minimality) of the resulting precision assignment is validated
empirically for four networks on the CIFAR-10, CIFAR-100, and SVHN datasets. The complexity reduction arising from our approach is compared with other
fixed-point neural network designs.
\vspace*{-0.cm}
\end{abstract}
\section{Introduction}
Though deep neural networks (DNNs) have established themselves as powerful predictive models achieving human-level accuracy on many machine learning tasks \citep{resnet}, their excellent performance has been achieved at the expense of a very high \emph{computational} and \emph{parameter} complexity. For instance, AlexNet \citep{kriz} requires over $800\times 10^6$ multiply-accumulates (MACs) per image and has 60 million parameters, while Deepface \citep{deepface} requires over $500\times 10^{6}$ MACs/image and involves more than 120 million parameters. DNNs' enormous computational and parameter complexity leads to high energy consumption \citep{eyeriss}, makes their training via the \emph{stochastic gradient descent} (SGD) algorithm very slow often requiring hours and days \citep{trainingHours}, and inhibits their deployment on energy and resource-constrained platforms such as mobile devices and autonomous agents.

A fundamental problem contributing to the high computational and parameter complexity of DNNs is their realization using 32-b floating-point (FL) arithmetic in GPUs and CPUs. Reduced-precision representations such as \emph{quantized FL} (QFL) and \emph{fixed-point} (FX) have been employed in various combinations to both training and inference. Many employ FX during inference but train in FL, e.g., fully binarized neural networks \citep{bc2} use 1-b FX in the forward inference path but the network is trained in 32-b FL. Similarly, \citet{gupta} employs 16-b FX for all tensors except for the internal accumulators which use 32-b FL, and 3-level QFL gradients were employed \citep{terngrad,qsgd} to accelerate training in a distributed setting.  Note that while QFL reduces storage and communication costs, it does not reduce the computational complexity as the arithmetic remains in 32-b FL. 

Thus, none of the previous works address the fundamental problem of realizing \emph{true fixed-point DNN training}, i.e., an SGD algorithm in which all parameters/variables and all computations are implemented in FX with \emph{minimum precision} required to \emph{guarantee the network's inference/prediction accuracy} and \emph{training convergence}. The reasons for this gap are numerous including: 1) quantization errors propagate to the network output thereby directly affecting its accuracy \citep{qcom1}; 2) precision requirements of different variables in a network are interdependent and involve hard-to-quantify trade-offs \citep{sakrICML}; 3) proper quantization requires the knowledge of the dynamic range which may not be available \citep{bengioClip}; and 4) quantization errors may accumulate during training and can lead to stability issues \citep{gupta}.
 
Our work makes a major advance in closing this gap by proposing a systematic methodology to obtain \emph{close-to-minimum} per-layer precision requirements of an FX network that guarantees statistical similarity with full precision training. In particular, we jointly address the challenges of quantization noise, inter-layer and intra-layer precision trade-offs, dynamic range, and stability. As in \citep{sakrICML}, we do assume that a fully-trained baseline FL network exists and one can observe its learning behavior. While, in principle, such assumption requires extra FL computation prior to FX training, it is to be noted that much of training is done in FL anyway. For instance, FL training is used in order to establish benchmarking baselines such as AlexNet \citep{kriz}, VGG-Net \citep{vggnet}, and ResNet \citep{resnet}, to name a few. Even if that is not the case, in practice, this assumption can be accounted for via a warm-up FL training on a small held-out portion of the dataset \citep{holdout}.

Applying our methodology to three benchmarks reveals several lessons. First and foremost, our work shows that it is possible to FX quantize all variables including back-propagated gradients even though their dynamic range is unknown \citep{flexpoint}. Second, we find that the per-layer weight precision requirements decrease from the input to the output while those of the activation gradients and weight accumulators increase. Furthermore, the precision requirements for residual networks are found to be uniform across layers. Finally, hyper-precision reduction techniques such as weight and activation binarization \citep{bc2} or gradient ternarization \citep{terngrad} are not as efficient as our methodology since these do not address the fundamental problem of realizing true fixed-point DNN training.

We demonstrate FX training on three deep learning benchmarks (CIFAR-10, CIFAR-100, SVHN) achieving \emph{high fidelity} to our FL baseline in that we observe no loss of accuracy higher then \textbf{0.56\%} in all of our experiments. Our precision assignment is further shown to be \textit{within 1-b per-tensor} of the minimum. We show that our precision assignment methodology reduces representational, computational, and communication costs of training by up to \textbf{6$\times$}, \textbf{8$\times$}, and \textbf{4$\times$}, respectively, compared to the FL baseline and related works.

\section{Problem Setup, Notation, and Metrics}
\begin{figure}[t]
 \begin{center}
\includegraphics[width=0.99\textwidth]{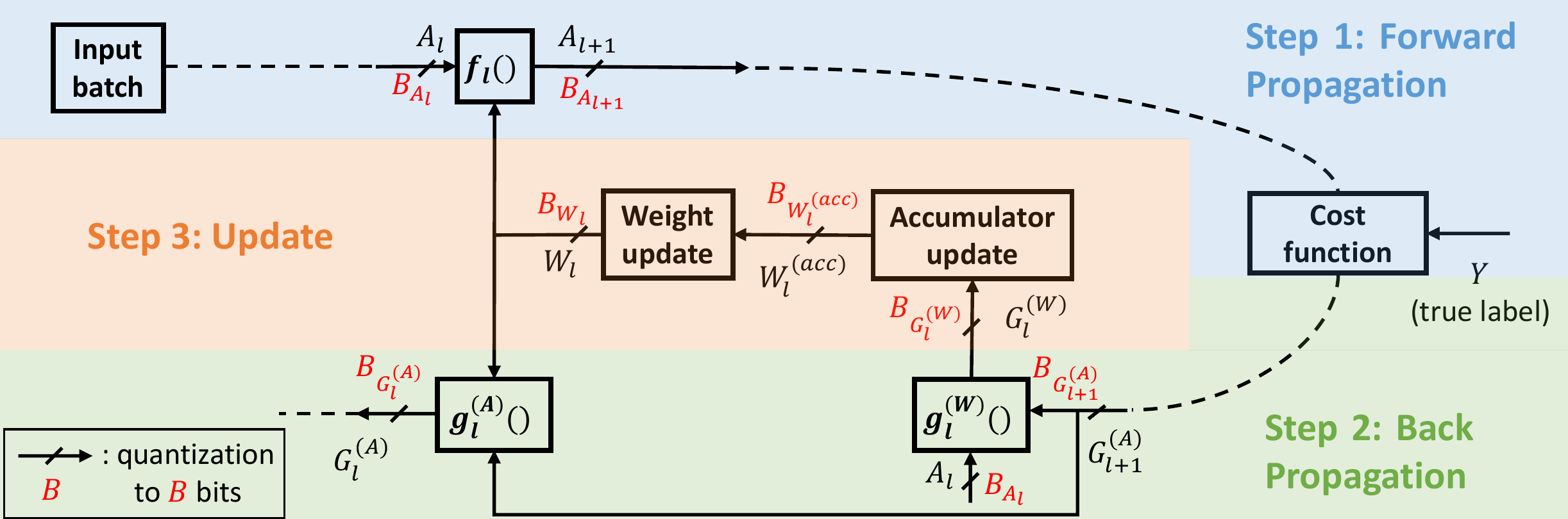} 
\vspace*{-0.cm}
\end{center} 
\caption{\small Problem setup: FX training at layer $l$ of a DNN showing the quantized tensors and the associated precision configuration $C_l=(B_{W_l},B_{A_l}, B_{G^{(W)}_l},B_{G^{(A)}_{l+1}}, B_{W^{(acc)}_l})$.\vspace*{-0.cm}}
\label{fig::notation}
\end{figure}
We consider a $L$-layer DNN deployed on a $M$-class classification task using the setup in Figure~\ref{fig::notation}.  We denote the \emph{precision configuration} as the $L\times 5$ matrix $C =(B_{W_l},B_{A_l}, B_{G^{(W)}_l},B_{G^{(A)}_{l+1}}, B_{W^{(acc)}_l})_{l=1}^L$ whose $l^{\text{th}}$ row consists of the precision (in bits) of weight $W_l$ ($B_{W_l}$), activation $A_l$ ($B_{A_l}$), weight gradient $G^{(W)}_l$ ($B_{G^{(W)}_l}$), activation gradient $G^{(A)}_{l+1}$ ($B_{G^{(A)}_{l+1}}$), and internal weight accumulator $W^{(acc)}_l$ ($ B_{W^{(acc)}_l}$) tensors at layer $l$. This DNN quantization setup is summarized in Appendix~\ref{supp::setup}.  

\subsection{Fixed-point Constraints \& Definitions}
\label{sec::definitions}
We present definitions/constraints related to fixed-point arithmetic based on the design of fixed-point adaptive filters and signal processing systems \citep{att}:
\begin{itemize}[noitemsep,topsep=0pt,leftmargin=*]
\item A \textit{signed} fixed-point scalar $a$ with precision $B_A$ and binary representation $R_A = (a_0,a_1,\ldots,a_{B_A-1}) \in \{0,1\}^{B_A} $ is equal to:
$
a=r_A\left(-a_0+\sum_{i=1}^{B_A-1}2^{-i}a_i\right)
$,
where $r_A$ is the predetermined dynamic range (PDR) of $a$. The PDR is constrained to be a \textbf{constant power of 2} to minimize hardware overhead.
\item An \textit{unsigned} fixed-point scalar $a$ with precision $B_A$ and binary representation $R_A = (a_0,a_1,\ldots,a_{B_A-1}) \in \{0,1\}^{B_A} $ is equal to: 
$
a=r_A\sum_{i=0}^{B_A-1}2^{-i}a_i.
$
\item A fixed-point scalar $a$ is called \textit{normalized} if $r_A=1$.
\item The precision $B_A$ is determined as:
$B_A = \log_2\frac{r_A}{\Delta_A} +1$, where $\Delta_A$ is the \textit{quantization step size} which is the value of the \textit{least significant bit (LSB)}.
\item An \textit{additive model} for quantization is assumed:
$
a=\tilde{a}+q_a$,
where $a$ is the fixed-point number obtained by quantizing the floating-point scalar $\tilde{a}$, $q_a$ is a random variable uniformly distributed on the interval $\left[-\frac{\Delta_A}{2},\frac{\Delta_A}{2} \right]$, and the quantization noise variance is $Var(q_a) = \frac{\Delta_A^2}{12}$. The notion of quantization noise is most useful when there is limited knowledge of the distribution of $\tilde{a}$.
\item The \textit{relative quantization bias} $\eta_A$ is the offset:
$
\eta_A = \frac{|\Delta_A-\mu_A|}{\mu_A}
$,
where the first unbiased quantization level $\mu_A = \mathds{E}\left[\tilde{a} \big| \tilde{a}\in I_{1} \right]$ and $I_{1} = \left[\frac{\Delta_A}{2},\frac{3\Delta_A}{2}\right]$. The notion of quantization bias is useful when there is some knowledge of the distribution of $\tilde{a}$.
\item The \textit{reflected quantization noise variance} from a tensor $T$ to a scalar $\alpha = f(T)$, for an arbitrary function $f()$, is :
$
V_{T\rightarrow \alpha} = E_{T\rightarrow \alpha}\frac{\Delta^2_{T}}{12}
$,
where $\Delta_{T}$ is the quantization step of $T$ and $E_{T\rightarrow \alpha}$ is the \textit{quantization noise gain} from $T$ to $\alpha$.
\item The \textit{clipping rate} $\beta_T$ of a tensor $T$ is the probability:
$\beta_T = \Pr\left(\left\{|t|\geq r_T: t\in T\right\}\right)$,
where $r_T$ is the PDR of $T$.
\end{itemize}
\subsection{ Complexity Metrics}
We use a set of metrics inspired by those introduced by \citet{sakrICML} which have also been used by \citet{yingyanICML}. These metrics are algorithmic in nature which makes them easily reproducible.
\begingroup
\setlength{\abovedisplayskip}{2pt}
\setlength{\belowdisplayskip}{2pt}
\begin{itemize}[noitemsep,topsep=0pt,leftmargin=*]
\item \textit{Representational Cost} for weights ($\mathcal{C}_W$) and activations ($\mathcal{C}_A$):\\
$ \mathcal{C}_W = \sum_{l=1}^L \left|W_l\right|\left(B_{W_l}+B_{G^{(W)}_l}+B_{W^{(acc)}_l}\right) ~~\&~~\mathcal{C}_A = \sum_{l=1}^L \left|A_l\right|\left(B_{A_l}+B_{G^{(A)}_{l+1}}\right),$\\
which equals the total number of bits needed to represent the weights, weight gradients, and internal weight accumulators ($C_W$), and those for activations and activation gradients ($C_A$).
\footnote{We use the notation $|T| $ to denote the number of elements in tensor $T$.
Unquantized tensors are assumed to have a 32-b FL representation, which is the single-precision in a GPU.}
\item \textit{Computational Cost} of training:
$
\mathcal{C}_{M} = \sum_{l=1}^L \left|A_{l+1}\right|D_l\left(B_{W_l}B_{A_l}+B_{W_l}B_{G^{(A)}_{l+1}}+B_{A_l}B_{G^{(A)}_{l+1}} \right),
$
where $D_l$ is the dimensionality of the dot product needed to compute one output activation at layer $l$. This cost is a measure of the number of 1-b full adders (FAs) utilized for all multiplications in one back-prop iteration. 
\footnote{
When considering 32-b FL multiplications, we ignore the cost of exponent addition thereby favoring the FL (conventional) implementation. Boundary effects (in convolutions) are neglected.}
\item \textit{Communication Cost}:
$\mathcal{C}_{C}=\sum_{l=1}^L \left|W_l\right|B_{G^{(W)}_l},$
which represents cost of communicating weight gradients in a distributed setting \citep{terngrad,qsgd}. 
\end{itemize}
\endgroup
\section{Precision Assignment Methodology and Analysis}
We aim to obtain a minimal or close-to-minimal precision configuration $C_o$ of a FX network such that the mismatch probability $p_m=\Pr\{\hat{Y}_{fl} \neq \hat{Y}_{fx}\}$ between its predicted label ($\hat{Y}_{fx}$) and that of an associated FL network ($\hat{Y}_{fl}$) is bounded, and the convergence behavior of the two networks is similar. 

Hence, we require that: (1) all quantization noise sources in the forward path contribute identically to the mismatch budget $p_m$ \citep{sakrICML}, (2) the gradients be properly clipped in order to limit the dynamic range \citep{bengioClip}, (3) the accumulation of quantization noise bias in the weight updates be limited \citep{gupta}, (4) the quantization noise in activation gradients be limited as these are back-propagated to calculate the weight gradients, and (5) the precision of weight accumulators should be set so as to avoid premature stoppage of convergence \citep{shanbhag}. The above insights can be formally described via the following five quantization criteria.

\begin{criterion}\textit {Equalizing Feedforward Quantization Noise (EFQN) Criterion.}
The reflected quantization noise variances onto the mismatch probability $p_m$ from all feedforward weights ($\{V_{W_l\rightarrow p_m}\}_{l=1}^L$) and activations ($\{V_{A_l\rightarrow p_m}\}_{l=1}^L$) should be equal:
\begin{align*}
V_{W_1\rightarrow p_m} = \ldots = V_{W_L\rightarrow p_m} =  V_{A_1\rightarrow p_m}= \ldots = V_{A_L\rightarrow p_m}
\end{align*}
\end{criterion}
\begin{criterion}\textit{Gradient Clipping (GC) Criterion.}
The clipping rates of weight ($\{\beta_{G^{(W)}_l}\}_{l=1}^L$) and activation ($\{\beta_{G^{(A)}_{l+1}}\}_{l=1}^L$) gradients should be less than a maximum value $\beta_0$:
\begin{align*}
\beta_{G^{(W)}_l}<\beta_0 \quad \& \quad \beta_{G^{(A)}_{l+1}}<\beta_0 \quad \text{for } l=1\ldots L.
\end{align*}
\end{criterion}
\begin{criterion}\textit{Relative Quantization Bias (RQB) Criterion.}
The relative quantization bias  
of weight gradients ($\{\eta_{G^{(W)}_l}\}_{l=1}^L$) should be less than a maximum value $\eta_0$:
\begin{align*}
\eta_{G^{(W)}_l}<\eta_0 \quad \text{for } l=1\ldots L.
\end{align*}
\end{criterion}
\begin{criterion}\textit{Back-propagated Quantization Noise (BQN) Criterion.}
The reflected quantization noise variance $V_{G^{(A)}_{l+1}\rightarrow \Sigma_{l}}$, i.e., the total sum of element-wise variances of $G^{(W)}_{l}$ reflected from quantizing $G^{(A)}_{l+1}$, should be less than $V_{G^{(W)}_{l}\rightarrow \Sigma_{l}}$:
\begin{align*}
V_{G^{(A)}_{l+1}\rightarrow \Sigma_{l}}\leq V_{G^{(W)}_{l}\rightarrow \Sigma_{l}} \quad \text{for } l=1\ldots L.
\end{align*}
where $\Sigma_{l}$ is the total sum of element-wise variances of $G^{(W)}_{l}$.
\end{criterion}
\begin{criterion}\textit{Accumulator Stopping (AS) Criterion.}
The quantization noise of the internal accumulator should be zero, equivalently:
$$ V_{W^{(acc)}_l\rightarrow \Sigma^{(acc)}_{l}} = 0 \quad \text{for } l=1\ldots L$$
where $V_{W^{(acc)}_l\rightarrow \Sigma^{(acc)}_{l}}$ is the reflected quantization noise variance from $W^{(acc)}_l$ to $\Sigma^{(acc)}_{l}$, its total sum of element-wise variances.
\end{criterion}

Further explanations and motivations behind the above criteria are presented in Appendix \ref{supp::criteria}. The following claim ensures the satisfiability of the above criteria. This leads to closed form expressions for the precision requirements we are seeking and completes our methodology. The validity of the claim is proved in Appendix \ref{supp::satisfiability}.

\begin{claim}\label{claim::main}\textit{Satisfiability of Quantization Criteria.}
The five quantization criteria (EFQN, GC, RQB, BQN, AS) are satisfied if:
\begin{itemize}[noitemsep,topsep=0pt,leftmargin=*]
\item The precisions $B_{W_l}$ and $B_{A_l}$ are set as follows:
\begin{align}\label{eqn::bal}
 B_{W_l} = rnd\left(\log_2\left(\sqrt{\frac{E_{W_l\rightarrow p_m}}{E^{(\min)}}}\right)\right) + B^{(\min)}~~ \& ~~
 B_{A_l} = rnd\left(\log_2\left(\sqrt{\frac{E_{A_l\rightarrow p_m}}{E^{(\min)}}}\right)\right) + B^{(\min)}\end{align}
for $l=1\ldots L$, where $rnd()$ denotes the rounding operation, $E_{W_l\rightarrow p_m}$ and $E_{A_l\rightarrow p_m}$ are the weight and activation quantization noise gains at layer $l$, respectively, $B^{(\min)}$ is a reference minimum precision, and $E^{(\min)}=\min \left(\left\{E_{W_l\rightarrow p_m}\right\}_{l=1}^L, \left\{E_{A_l\rightarrow p_m}\right\}_{l=1}^L \right)$.
\item The weight and activation gradients PDRs are lower bounded as follows:
\begin{align}
\label{eqn::pdrgw}
r_{G^{(W)}_l} \geq 2\sigma_{G^{(W)}_l}^{(\max)} \quad \& \quad r_{G^{(A)}_{l+1}} \geq 4\sigma_{G^{(A)}_{l+1}}^{(\max)}\quad \text{for } l=1\ldots L
\end{align}
where $\sigma_{G^{(W)}_l}^{(\max)}$ and $\sigma_{G^{(A)}_{l+1}}^{(\max)}$ are the largest recorded estimates of the weight and activation gradients standard deviations $\sigma_{G^{(W)}_l}$ and $\sigma_{G^{(A)}_{l+1}}$, respectively.
\item The weight and activation gradients quantization step sizes are upper bounded as follows:
\begin{align}
\label{eqn::deltagw}
\Delta_{G^{(W)}_l} < \frac{\sigma_{G^{(W)}_l}^{(\min)}}{4}  \quad \& \quad \Delta_{G^{(A)}_{l+1}} < \frac{\Delta_{G^{(W)}_{l}}}{\sqrt{\lambda_{G^{(A)}_{l+1}\rightarrow G^{(W)}_{l}}^{(\max)}}}\left(\frac{\left|G^{(W)}_{l} \right|}{\left|G^{(A)}_{l+1} \right|} \right)^{1/4}\quad \text{for } l=1\ldots L
\end{align}
where $\sigma_{G^{(W)}_l}^{(\min)}$ is the smallest recorded estimate of $\sigma_{G^{(W)}_l}$ and  $\lambda_{G^{(A)}_{l+1}\rightarrow G^{(W)}_{l}}^{(\max)}$ is the largest singular value of the square-Jacobian (Jacobian matrix with squared entries) of $G^{(W)}_{l}$ with respect to $G^{(A)}_{l+1}$. 
\item The accumulator PDR and step size satisfy:
\begin{align}
\label{eqn::internal} 
r_{W^{(acc)}_l}\geq 2^{-B_{W_l}} \quad \& \quad \Delta_{W^{(acc)}_l} < \gamma^{(\min)} \Delta_{G^{(W)}_l}\quad \text{for } l=1\ldots L
\end{align}
where $\gamma^{(\min)}$ is the smallest value of the learning rate used during training.
\end{itemize}
\end{claim}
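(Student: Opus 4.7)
The plan is to verify the five criteria in turn, since each of the bulleted conditions in the claim maps onto one criterion. Each verification reduces either to (i) an algebraic rearrangement of the quantization-noise-variance formula $V = E\,\Delta^{2}/12$ using $\Delta_{T} = r_{T}\,2^{-(B_{T}-1)}$, or (ii) a concentration-type tail inequality applied to the empirical gradient distribution. I would dispatch EFQN, GC, RQB, and AS quickly, and then devote the remaining effort to BQN, which is the only one requiring non-trivial machinery.

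For EFQN, start from $V_{T\rightarrow p_{m}} = E_{T\rightarrow p_{m}}\,r_{T}^{2}\,2^{-2(B_{T}-1)}/12$. Assuming feedforward weights and activations share a common PDR (e.g.\ normalized tensors with $r_{T}=1$, the standard practice), equating these variances across all feedforward tensors and isolating $B_{T}$ yields exactly \eqref{eqn::bal}, with $rnd(\cdot)$ absorbing the integrality of bit-widths and $B^{(\min)}$ anchoring the reference at the tensor whose gain equals $E^{(\min)}$. For GC, the hypothesis $r_{T}\geq k\,\sigma_{T}^{(\max)}$ with $k=2$ for weight gradients and $k=4$ for activation gradients upper-bounds $\beta_{T} = \Pr(|t|\geq r_{T})$ via Chebyshev's inequality, giving $\beta_{T}\leq 1/k^{2}$; choosing $\beta_{0} = 1/4$ (or a tighter Gaussian tail analog if the gradients are approximately normal) makes GC hold at every layer. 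For RQB, expanding $\mu_{G^{(W)}_{l}} = \mathds{E}[\tilde{g}\mid \tilde{g}\in I_{1}]$ for a smooth near-zero-mean gradient density shows that $\eta_{G^{(W)}_{l}} = |\Delta-\mu|/\mu$ is monotone decreasing in $\sigma/\Delta$, so $\Delta_{G^{(W)}_{l}} < \sigma_{G^{(W)}_{l}}^{(\min)}/4$ delivers a uniform bound $\eta_{0}$ across layers. For AS, the condition $\Delta_{W^{(acc)}_{l}} < \gamma^{(\min)}\,\Delta_{G^{(W)}_{l}}$ forces every update $\gamma\,G^{(W)}_{l}$ to be an exact integer multiple of the accumulator LSB, so the accumulator addition is noise-free; the PDR condition $r_{W^{(acc)}_{l}}\geq 2^{-B_{W_{l}}}$ then guarantees enough dynamic range to hold the committed weight increment at the resolution of $W_{l}$.

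The main obstacle is BQN, which requires tracking how the per-element quantization variance $\Delta^{2}_{G^{(A)}_{l+1}}/12$ propagates through the backward-pass map to the aggregate $\Sigma_{l}$. Let $\tilde{J}$ denote the square-Jacobian with entries $\tilde{J}_{ij} = \bigl(\partial [G^{(W)}_{l}]_{i}/\partial [G^{(A)}_{l+1}]_{j}\bigr)^{2}$. Using the additive, mutually independent quantization-noise model of Section~2.1, one obtains
\begin{equation*}
V_{G^{(A)}_{l+1}\rightarrow\Sigma_{l}} \;=\; \frac{\Delta^{2}_{G^{(A)}_{l+1}}}{12}\sum_{i,j}\tilde{J}_{ij} \;=\; \frac{\Delta^{2}_{G^{(A)}_{l+1}}}{12}\,\mathbf{1}^{T}\tilde{J}\,\mathbf{1}.
\end{equation*}
Two applications of Cauchy--Schwarz give $\mathbf{1}^{T}\tilde{J}\,\mathbf{1} \leq \lambda^{(\max)}_{G^{(A)}_{l+1}\rightarrow G^{(W)}_{l}}\,\sqrt{|G^{(W)}_{l}|\cdot|G^{(A)}_{l+1}|}$. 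Combining with the baseline $V_{G^{(W)}_{l}\rightarrow\Sigma_{l}} = |G^{(W)}_{l}|\,\Delta^{2}_{G^{(W)}_{l}}/12$ and imposing $V_{G^{(A)}_{l+1}\rightarrow\Sigma_{l}} \leq V_{G^{(W)}_{l}\rightarrow\Sigma_{l}}$ yields, after rearranging and taking a square root, precisely the step-size bound on $\Delta_{G^{(A)}_{l+1}}$ in \eqref{eqn::deltagw}. The delicate points that will need care are justifying the additivity/independence of back-propagated quantization noises through the intervening nonlinearities (inherited from the paper's quantization model) and the use of the worst-case $\lambda^{(\max)}$ recorded over training, so that the bound remains valid despite $\tilde{J}$ fluctuating stochastically across iterations.
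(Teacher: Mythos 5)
Your overall structure mirrors the paper's proof (which proceeds via five lemmas, one per criterion), and your treatment of EFQN, BQN, and AS is essentially the paper's: for EFQN the paper likewise equates $V = E\,\Delta^2/12$ across feedforward tensors with $\Delta^2 \propto 2^{-2B}$, divides by $E^{(\min)}$, takes square roots and logs, anchors at $B^{(\min)}$ and rounds; for BQN it likewise writes $V_{G^{(W)}_l\rightarrow\Sigma_l}=|G^{(W)}_l|\Delta^2_{G^{(W)}_l}/12$, expresses the reflected noise through the square-Jacobian, and bounds $\bm{1}^T \tilde J\,\bm{1} \le \lambda^{(\max)}\sqrt{|G^{(W)}_l||G^{(A)}_{l+1}|}$ by Cauchy--Schwarz plus the spectral norm before rearranging; and its AS argument is at the same informal level as yours.

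The genuine gaps are in GC and RQB, where the specific numerical targets matter. The paper's Lemma for GC establishes the criterion for $\beta_0 = 5\%$ (the value used throughout the experiments): it argues via the CLT that mini-batch gradients are approximately zero-mean Gaussian, so $\beta_{G^{(W)}_l} \le 2Q\bigl(r_{G^{(W)}_l}/\sigma^{(\max)}_{G^{(W)}_l}\bigr) \le 2Q(2) \approx 4.4\% < 5\%$; your primary route, Chebyshev, only gives $1/k^2 = 25\%$ for the weight gradients, which does not meet the target -- the Gaussian tail bound you mention only parenthetically is in fact the required argument (the paper even quantifies the CLT approximation via Berry--Esseen). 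Relatedly, the factor $4$ for activation-gradient PDRs is not a tighter tail bound in the paper but a safety margin motivated by ReLU-induced sparsity inflating the true dynamic range beyond what the second moment indicates. For RQB, asserting that $\eta$ is monotone in $\sigma/\Delta$ does not by itself justify the constant $1/4$: one must evaluate $\eta$ at the boundary, which the paper does by computing the Gaussian conditional mean $\mu_{G^{(W)}_l}=\mathds{E}\bigl[\tilde g \mid \tilde g \in [\Delta/2, 3\Delta/2]\bigr]$ in closed form and checking that $\Delta_{G^{(W)}_l}=\sigma_{G^{(W)}_l}/4$ yields $\eta \approx 0.4\% < 1\% = \eta_0$; without that explicit evaluation (and the Gaussian model underpinning it), your argument does not establish the stated threshold.
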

\textbf{Practical considerations:} Note that one of the $2L$ feedforward precisions will equal $B^{(\min)}$. The formulas to compute the quantization noise gains are given in Appendix \ref{supp::satisfiability} and require only one forward-backward pass on an estimation set. We would like the EFQN criterion to hold upon convergence; hence, \eqref{eqn::bal} is computed using the converged model from the FL baseline. For backward signals, setting the values of PDR and LSB is sufficient to determine the precision using the identity $B_A = \log_2\frac{r_A}{\Delta_A} +1$, as explained in Section \ref{sec::definitions}. As per Claim \ref{claim::main}, estimates of the second order statistics, e.g., $\sigma_{G^{(W)}_l}$ and $\sigma_{G^{(A)}_{l+1}}$, of the gradient tensors, are required. These are obtained via tensor spatial averaging, so that one estimate per tensor is required, and updated in a moving window fashion, as is done for normalization parameters in BatchNorm \citep{batchNorm}. Furthermore, it might seem that computing the Jacobian in \eqref{eqn::deltagw} is a difficult task; however, the values of its elements are already computed by the back-prop algorithm, requiring no additional computations (see Appendix \ref{supp::satisfiability}). Thus, the Jacobians (at different layers) are also estimated during training. Due to the typical very large size of modern neural networks, we average the Jacobians spatially, i.e., the activations are aggregated across channels and mini-batches while weights are aggregated across filters. This is again inspired by the work on Batch Normalization \citep{batchNorm} and makes the probed Jacobians much smaller. 

\section{Numerical Results}
We conduct numerical simulations in order to illustrate the validity of the predicted precision configuration $C_o$ and investigate its minimality and benefits.
We employ three deep learning benchmarking datasets: CIFAR-10, CIFAR-100 \citep{cifar10}, and SVHN \citep{svhn}. All experiments were done using a Pascal P100 NVIDIA GPU. We train the following networks:
\begin{itemize}[noitemsep,topsep=0pt,leftmargin=*]
\item \underline{CIFAR-10 ConvNet}: a 9-layer convolutional neural network trained on the CIFAR-10 dataset described as $2\times(64C3) - MP2 - 2\times(128C3) - MP2 - 2\times(256C3) - 2\times(512FC)-10$ where $C3$ denotes $3\times 3$ convolutions, $MP2$ denotes $2\times2$ max pooling operation, and $FC$ denotes fully connected layers. 
\item \underline{SVHN ConvNet}: the same network as the CIFAR-10 ConvNet, but trained on the SVHN dataset.
\item \underline{CIFAR-10 ResNet}: a wide deep residual network \citep{wideresnet} with ResNet-20 architecture but having 8 times as many channels per layer compared to \citep{resnet}.
\item \underline{CIFAR-100 ResNet}: same network as CIFAR-10 ResNet save for the last layer to match the number of classes (100) in CIFAR-100.
\end{itemize}
A step by step description of the application of our method to the above four networks is provided in Appendix \ref{supp::illustration}. We hope the inclusion of these steps would: (1) clarify any ambiguity the reader may have from the previous section and (2) facilitate the reproduction of our results.

\subsection{Precision Configuration $C_o$ \& Convergence}
\begin{figure*}[t] \begin{center}
\begin{subfigure}[t]{0.4\textwidth}\begin{center}
\includegraphics[width=\textwidth]{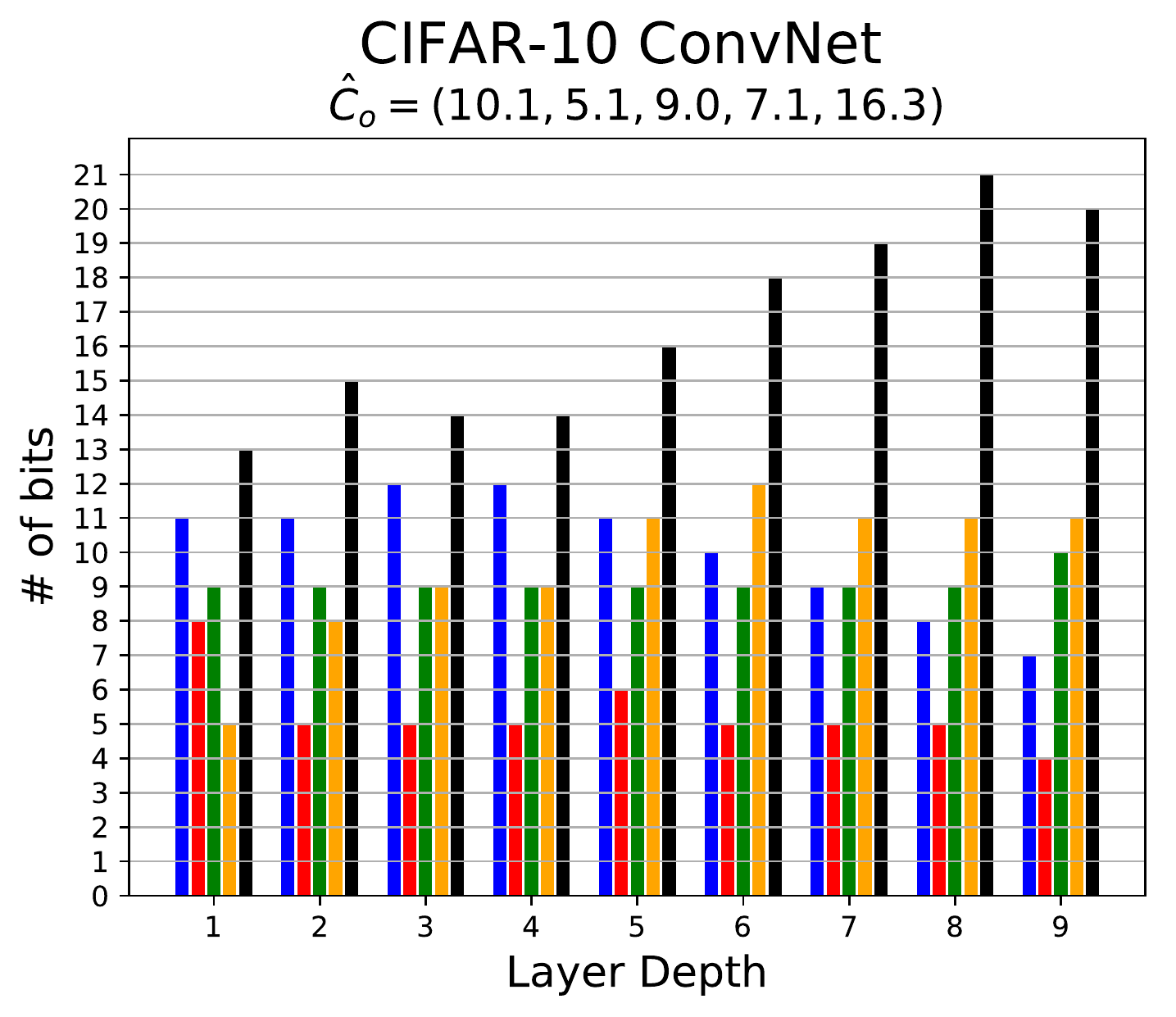}\vspace*{-0.cm}
\caption{}\end{center} \end{subfigure}  ~~~
\begin{subfigure}[t]{0.4\textwidth}\begin{center}
\includegraphics[width=\textwidth]{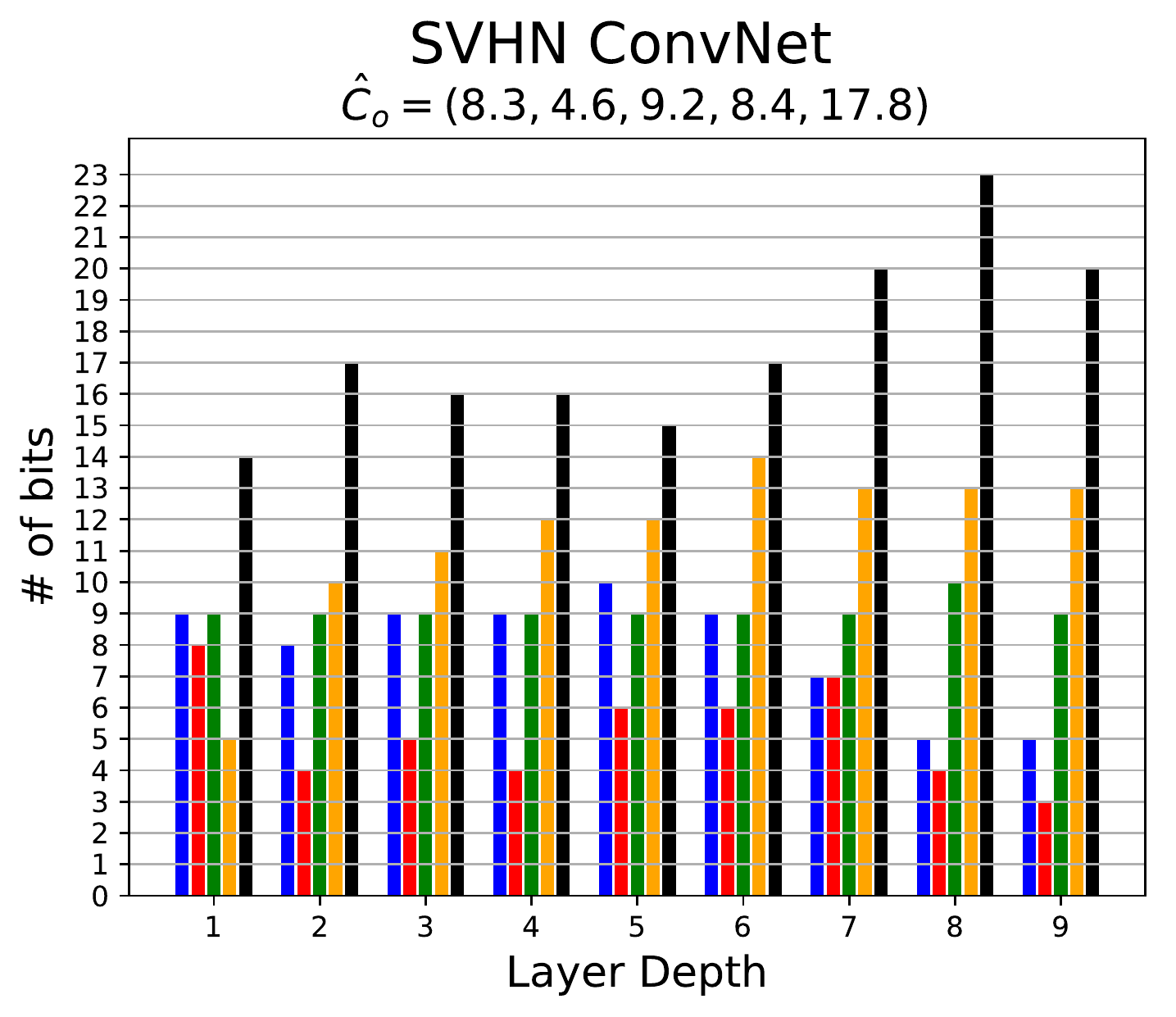}\vspace*{-0.cm}
\caption{}\end{center} \end{subfigure}~~~
\begin{subfigure}[t]{0.11\textwidth}\begin{center}
\raisebox{22mm}{\includegraphics[width=\textwidth,trim={1in 1in 1in 1in},clip]{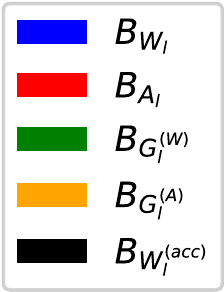}}
\caption*{}\end{center}\end{subfigure}\\ \vspace*{-0.cm}
\begin{subfigure}[t]{0.4\textwidth}\begin{center}
\includegraphics[width=\textwidth]{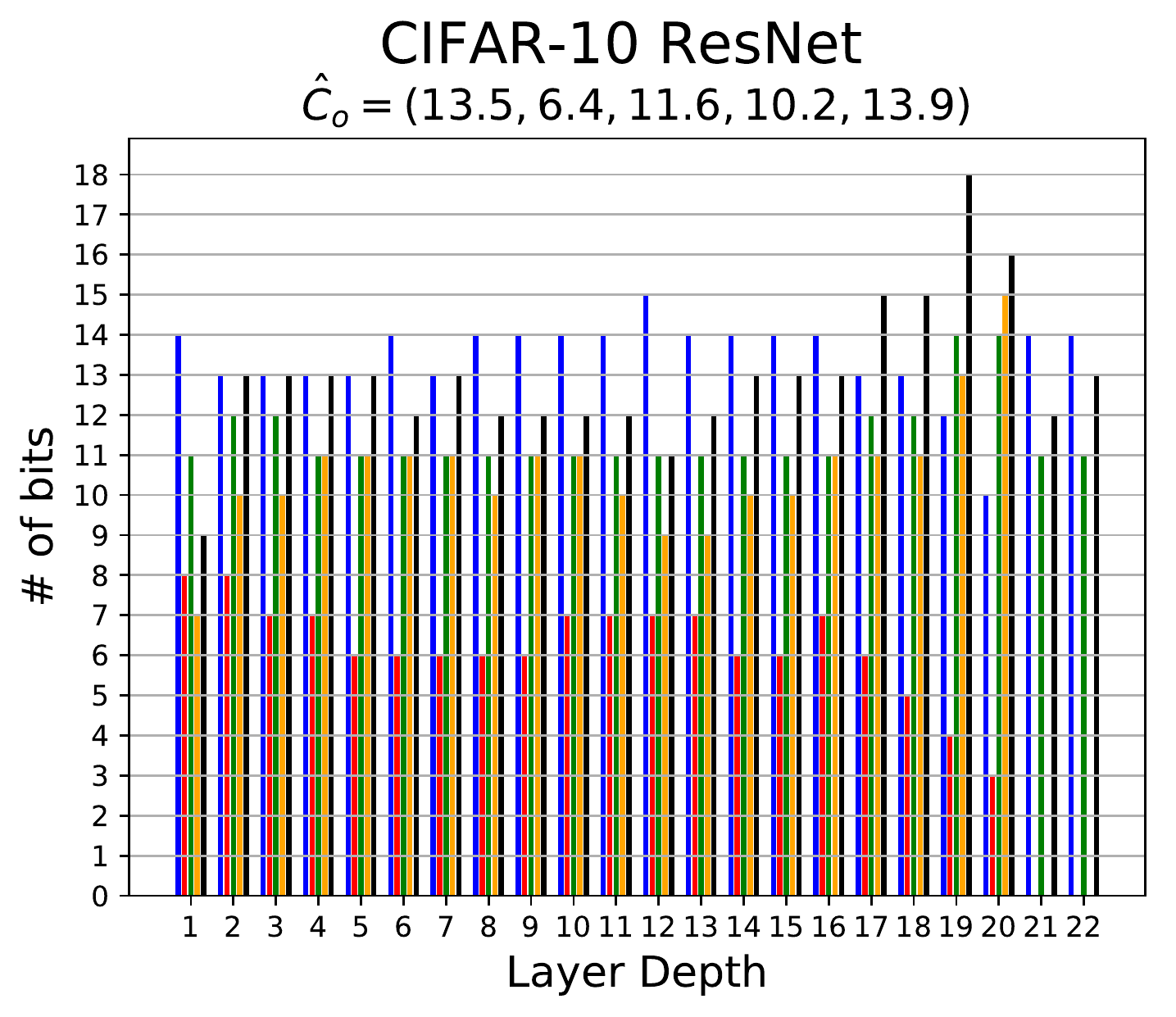}\vspace*{-0.cm}
\caption{}\end{center} \end{subfigure} ~~~
\begin{subfigure}[t]{0.4\textwidth}\begin{center}
\includegraphics[width=\textwidth]{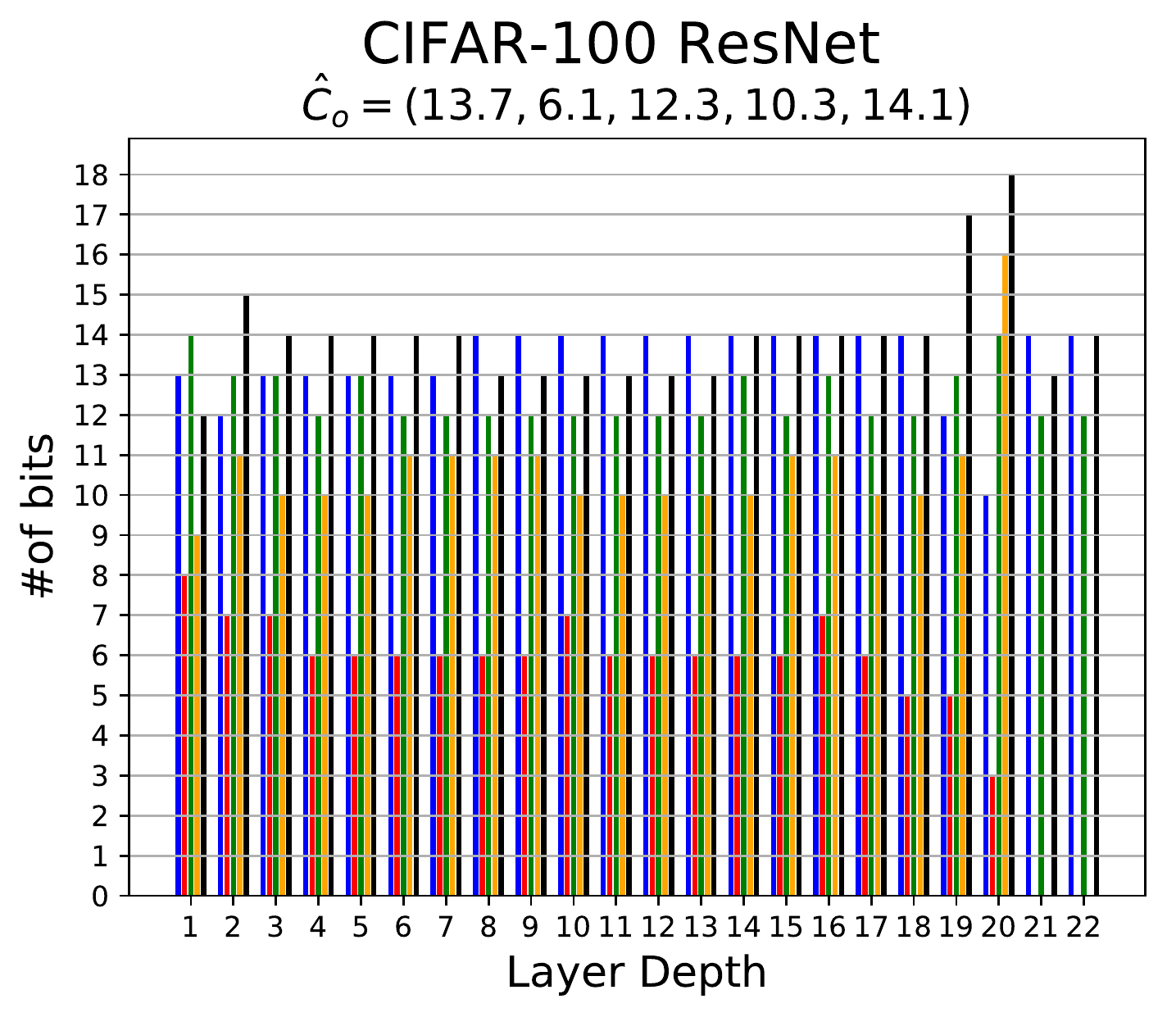}\vspace*{-0.cm}
\caption{}\end{center} \end{subfigure}~~~
\begin{subfigure}[t]{0.11\textwidth}\begin{center}
\raisebox{22mm}{\includegraphics[width=\textwidth,trim={1in 1in 1in 1in},clip]{legend_precision.pdf}}
\caption*{}\end{center}\end{subfigure}
\vspace*{-0.cm}\end{center}
\caption{\small The predicted precision configurations $C_o$ for the CIFAR-10 ConvNet (a), SVHN ConvNet (b), CIFAR-10 ResNet (c), and  CIFAR-100 ResNet (d). For each network, the 5-tuple $\hat{C}_o$ represents the average number of bits per tensor type. For the ResNets, layer depths 21 and 22 correspond to the strided convolutions in the shortcut connections of residual blocks 4 and 7, respectively. Activation gradients go from layer 2 to $L+1$ and are ``shifted to the left'' in order to be aligned with the other tensors.\vspace*{-0.cm} \label{fig::precisions}}
\end{figure*}

The precision configuration $C_o$, with target $p_m \leq 1\%$, $\beta_0 \leq 5\%$, and $\eta_0 \leq 1\%$, via our proposed method is depicted in Figure \ref{fig::precisions} for each of the four networks considered. We observe that $C_o$ is dependent on the network \textit{type}. Indeed, the precisions of the two ConvNets follow similar trends as do those the two ResNets.
Furthermore, the following observations are made for the ConvNets:
\begin{itemize}[noitemsep,topsep=0pt,leftmargin=*]
\item weight precision $B_{W_l}$ \textit{decreases as depth increases}. This is consistent with the observation that weight perturbations in the earlier layers are the most destructive \citep{expressive}.
\item the precisions of activation gradients ($B_{G^{(A)}_l}$) and internal weight accumulators ($B_{W^{(acc)}_l}$) \textit{increases as depth increases} which we interpret as follows: (1) the back-propagation of gradients is the \textit{dual} of the forward-propagation of activations, and (2) accumulators store the \textit{most information} as their precision is the highest.
\item the precisions of the weight gradients ($B_{G^{(W)}_l}$) and activations ($B_{A_l}$) are \textit{relatively constant across layers}.
\end{itemize} 

Interestingly, for ResNets, the precision is mostly uniform across the layers. Furthermore, the gap between $B_{W^{(acc)}_l}$ and the other precisions is not as pronounced as in the case of ConvNets. This suggests that information is spread equally among all signals which we speculate is due to the shortcut connections preventing the \textit{shattering} of information \citep{shatter}. 

\begin{figure*}[t] \begin{center}
\begin{subfigure}[b]{0.45\textwidth}\begin{center}
\includegraphics[width=\textwidth]{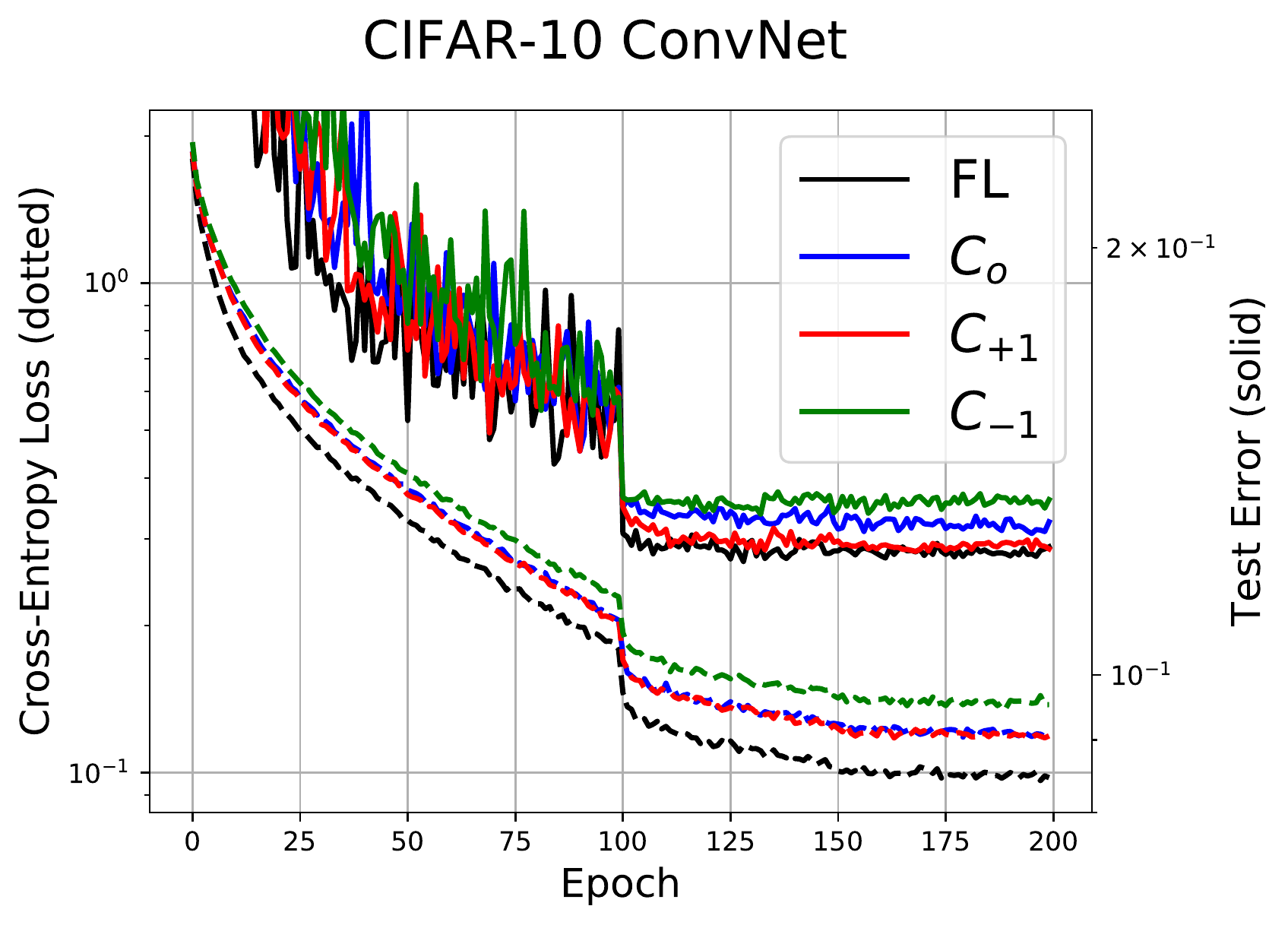}\vspace*{-0.cm}
\caption{}\end{center} \end{subfigure}~~ ~~~~
\begin{subfigure}[b]{0.45\textwidth}\begin{center}
\includegraphics[width=\textwidth]{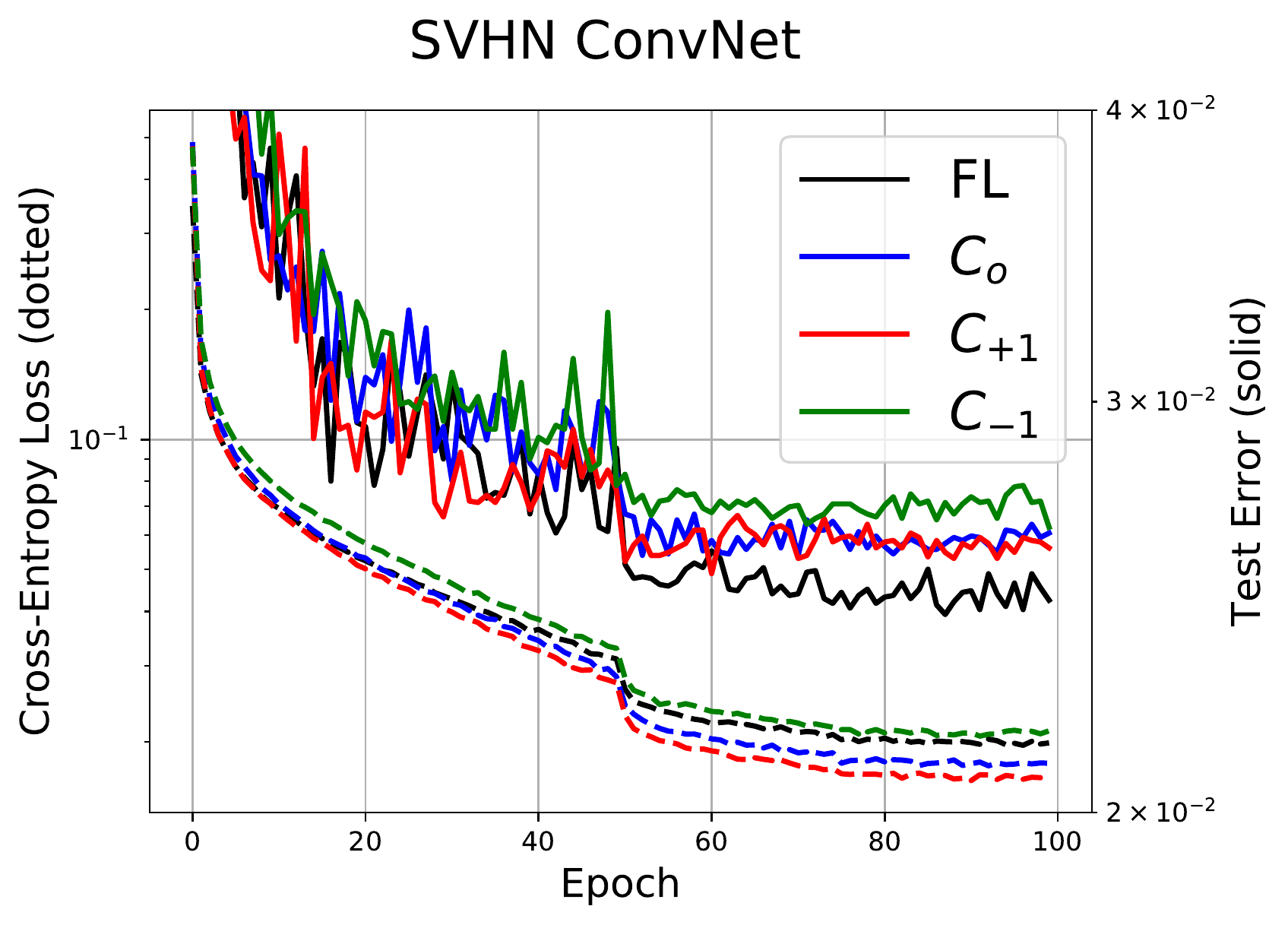}\vspace*{-0.cm}
\caption{}\end{center} \end{subfigure} \\ 
\begin{subfigure}[b]{0.45\textwidth}\begin{center}
\includegraphics[width=\textwidth]{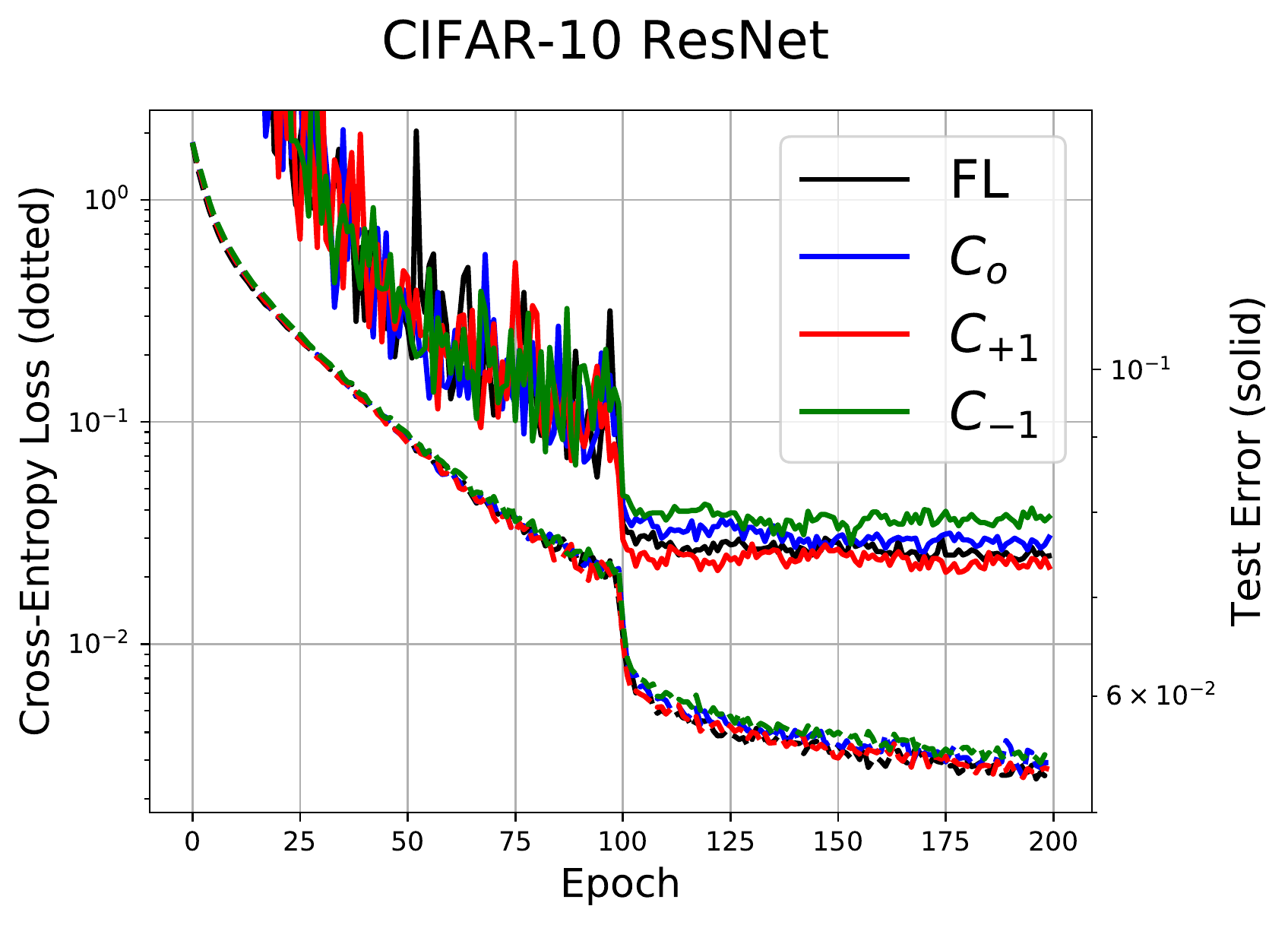}\vspace*{-0.cm}
\caption{}\end{center} \end{subfigure}~~ ~~~~
\begin{subfigure}[b]{0.45\textwidth}\begin{center}
\includegraphics[width=\textwidth]{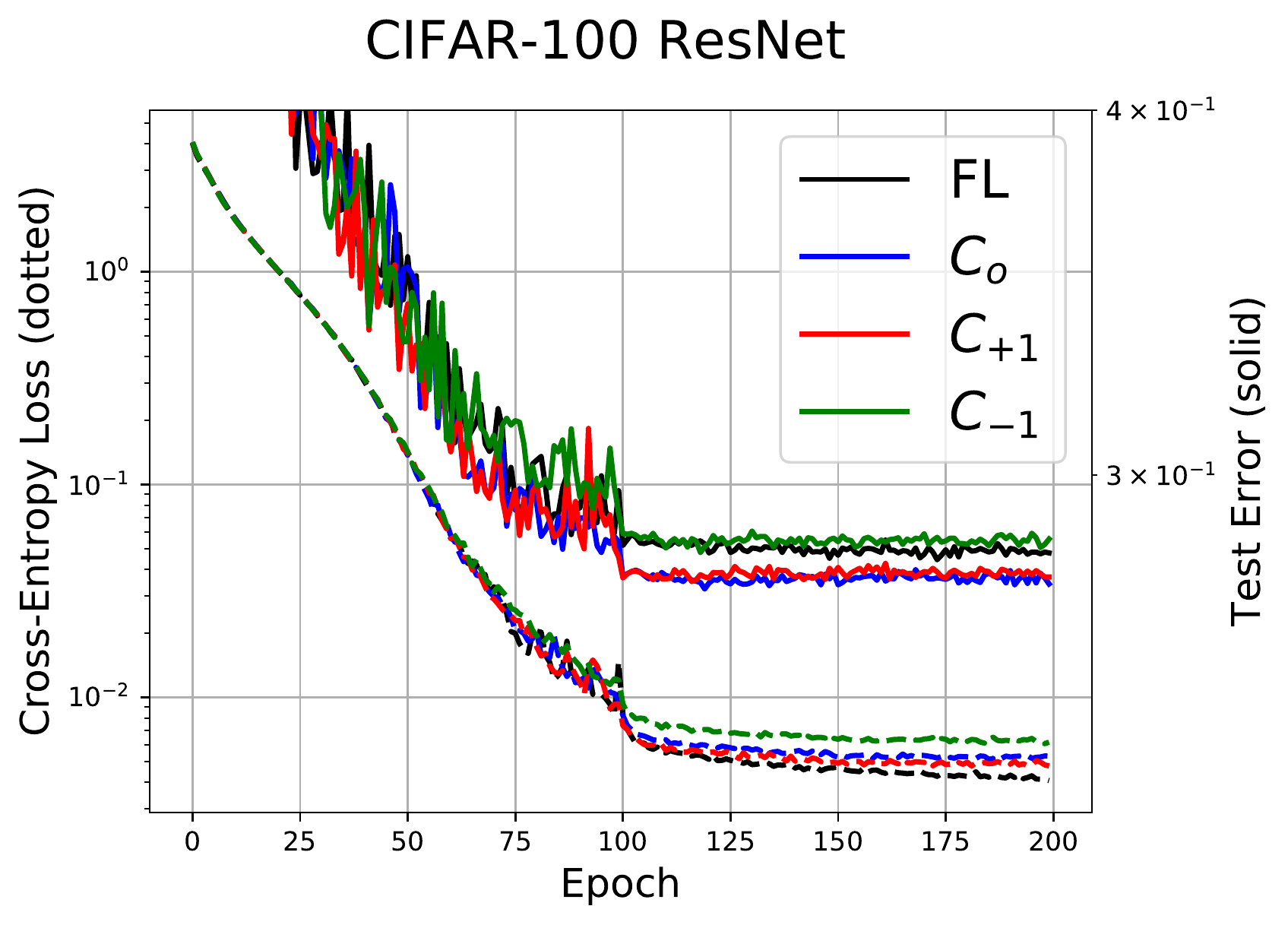}\vspace*{-0.cm}
\caption{}\end{center} \end{subfigure}
 \vspace*{-0.cm}
\end{center} 
\caption{\small Convergence curves for the CIFAR-10 ConvNet (a), SVHN ConvNet (b), CIFAR-10 ResNet (c), and  CIFAR-100 ResNet (d) including FL training as well as FX training with precision configurations $C_o$, $C_1$, and $C_{-1}$.\vspace*{-0.cm} \label{fig::plots}} \end{figure*}
FX training curves in Figure \ref{fig::plots}  indicate that $C_o$ leads to \textit{convergence} and consistently track FL curves with \textit{close fidelity}. This validates our analysis and justifies the choice of $C_o$.

\subsection{Near Minimality of $C_o$}
\vspace*{-0.cm}
To determine that $C_o$ is a close-to-minimal precision assignment,
we compare it with:
(a) $C_{+1} = C_o +\bm{1}_{L\times5},$ and
(b) $C_{-1}= C_o -\bm{1}_{L\times5},$
where $\bm{1}_{L\times5}$ is an $L\times5$ matrix with each entry equal to 1\footnote{PDRs are unchanged across configurations, except for $r_{W^{(acc)}_l}$ as per \eqref{eqn::internal}.}, i.e., we perturb $C_o$ by 1-b in either direction.
Figure \ref{fig::plots} also contains the convergence curves for the two new configurations. As shown, $C_{-1}$ always results in a noticeable gap compared to $C_o$ for both the loss function (except for the CIFAR-10 ResNet) and the test error. Furthermore, $C_{+1}$ offers no observable improvements over $C_o$ (except for the test error of CIFAR-10 ConvNet). These results support our contention that $C_o$ is \textit{close-to-minimal} in that increasing the precision above $C_o$ leads to diminishing returns while reducing precision below $C_o$ leads to a noticeable degradation in accuracy. Additional experimental results provided in Appendix \ref{supp::fractional} support our contention regarding the near minimality of $C_o$. Furthermore, by studying the impact of quantizing specific tensors we determine that that the accuracy is most sensitive to the precision assigned to weights and activation gradients.

\begin{table}[t]\small \centering
\caption{\small Complexity ($\mathcal{C}_W$, $\mathcal{C}_A$, $\mathcal{C}_{M}$, and $\mathcal{C}_{C}$) and accuracy (test error) for the floating-point (FL), fixed-point (FX) with precision configuration $C_o$, binarized network (BN), stochastic quantization (SQ), and TernGrad (TG) training schemes.}\label{table::comparison}
 \setlength\tabcolsep{4pt} 
 \renewcommand{\arraystretch}{1.1}
 \begin{tabular}{|c|c|c|c|c|c||c|c|c|c|c|}
 \cline{2-11}
 \multicolumn{1}{c|}{~}&\specialcell{$\mathcal{C}_W$\\ ($10^6$b)}&\specialcell{$\mathcal{C}_A$\\ ($10^6$b)}&\specialcell{$\mathcal{C}_{M}$\\ ($10^9$FA)}&\specialcell{$\mathcal{C}_{C}$\\ ($10^6$b)}&\specialcell{Test\\ Error }&\specialcell{$\mathcal{C}_W$\\ ($10^6$b)}&\specialcell{$\mathcal{C}_A$\\ ($10^6$b)}&\specialcell{$\mathcal{C}_{M}$\\ ($10^9$FA)}&\specialcell{$\mathcal{C}_{C}$\\ ($10^6$b)}&\specialcell{Test\\ Error }\\
 
 \hhline{~==========}
 \multicolumn{1}{c|}{~}&\multicolumn{5}{c||}{\textbf{CIFAR-10 ConvNet}}&\multicolumn{5}{c|}{\textbf{SVHN ConvNet}}\\
 \hline
 FL&148&9.3&94.4&49&12.02\%&148&9.3&94.4&49&\textbf{2.43}\%\\
 \hline
 \textbf{FX} ($C_o$)&\textbf{56.5}&\textbf{1.7}&11.9&14&12.58\%&\textbf{54.3}&\textbf{1.9}&10.5&14&2.58\%\\
 \hline
 BN&100&4.7&\textbf{2.8}&49&18.50\%&100&4.7&\textbf{2.8}&49&3.60\%\\
 \hline
 SQ&78.8&\textbf{1.7}&11.9&14&\textbf{11.32}\%&76.3&\textbf{1.9}&10.5&14&2.73\%\\
 \hline
 TG&102&9.3&94.4&\textbf{3.1}&12.49\%&102&9.3&94.4&\textbf{3.1}&3.65\%\\
\cline{1-1}\hhline{~==========}

 \multicolumn{1}{c|}{~}&\multicolumn{5}{c||}{\textbf{CIFAR-10 ResNet}}&\multicolumn{5}{c|}{\textbf{CIFAR-100 ResNet}}\\
\hline

FL&1784&96&4319&596&7.42\%&1789&97&4319&597&28.06\%\\
\hline
\textbf{FX} ($C_o$)&\textbf{726}&\textbf{25}&785&216&7.51\%&\textbf{750}&\textbf{25}&776&216&\textbf{27.43}\%\\
\hline
BN&1208&50&\textbf{128}&596&\textbf{7.24}\%&1211&50&\textbf{128}&597&29.35\%\\
\hline
SQ&1062&\textbf{25}&785&216&7.42\%&1081&\textbf{25}&776&216&28.03\%\\
\hline
TG&1227&96&4319&\textbf{37.3}&7.94\%&1230&97&4319&\textbf{37.3}&30.62\%\\
\hline
 \end{tabular}\vspace*{-0.cm} 
\end{table} 

\subsection{Complexity vs. Accuracy}
We would like to quantify the reduction in training cost and expense in terms of accuracy resulting from our proposed method and compare them with those of other methods. Importantly, for a fair comparison, the \textit{same} network architecture and training procedure are used. We report $\mathcal{C}_W$, $\mathcal{C}_A$, $\mathcal{C}_{M}$, $\mathcal{C}_{C}$, and test error, for each of the four networks considered for the following training methods:
\begin{itemize}[noitemsep,topsep=0pt,leftmargin=*]
\item baseline FL training and FX training using $C_o$,
\item binarized network (BN) training, where feedforward weights and activations are binary (constrained to $\pm 1$) while gradients and accumulators are in floating-point and activation gradients are back-propagated via the straight through estimator \citep{STE} as was done in \citep{bc2},
\item fixed-point training with stochastic quantization (SQ). As was done in \citep{gupta}, we quantize feedforward weights and activations as well as all gradients, but accumulators are kept in floating-point. The precision configuration (excluding accumulators) is inherited from $C_o$ (hence we determine exactly how much stochastic quantization helps),
\item training with ternarized gradients (TG) as was done in TernGrad \citep{terngrad}. All computations are done in floating-point but weight gradients are ternarized according to the instantaneous tensor spatial standard deviations $\{-2.5\sigma,0,2.5\sigma\}$ as was suggested by \citet{terngrad}. To compute costs, we assume all weight gradients use two bits although they are not really fixed-point and do require computation of 32-b floating-point scalars for every tensor.
\end{itemize}

The comparison is presented in Table \ref{table::comparison}.
The first observation is a massive complexity reduction compared to FL. For instance, for the CIFAR-10 ConvNet, the complexity reduction is $2.6\times$ ($=148/56.5$), $5.5\times$ ($=9.3/1.7$), $7.9\times$ ($=94.4/11.9$), and $3.5\times$ ($=49/14$) for $\mathcal{C}_W$, $\mathcal{C}_A$, $\mathcal{C}_{M}$, and $\mathcal{C}_{C}$, respectively. Similar trends are observed for the other four networks. Such complexity reduction comes at the expense of no more than \textbf{0.56\%} increase in test error. For the CIFAR-100 network, the accuracy when training in fixed-point is even better than that of the baseline.

The representational and communication costs of BN is significantly greater than that of FX because the gradients and accumulators are kept in full precision, which masks the benefits of binarizing feedforward tensors. However, benefits are noticeable when considering the computational cost which is lowest as binarization eliminates multiplications. Furthermore, binarization causes a severe accuracy drop for the ConvNets but surprisingly not for the ResNets. We speculate that this is due to the high dimensional geometry of ResNets \citep{highbruno}.

As for SQ, since $C_o$ was inherited, all costs are identical to FX, save for $\mathcal{C}_W$  which is larger due to full precision accumulators. Furthermore, SQ has a positive effect only on the CIFAR-10 ConvNet where it clearly acted as a regularizer.

TG does not provide complexity reductions in terms of representational and computational costs which is expected as it only compresses weight gradients. Additionally, the resulting accuracy is slightly worse than that of all other considered schemes, including FX. Naturally, it has the lowest communication cost as weight gradients are quantized to just 2-b.

\section{Discussion}\vspace*{-0.cm}
\subsection{Related Works}\vspace*{-0.cm}
Many works have addressed the general problem of reduced precision/complexity deep learning. 

\textbf{Reducing the complexity of inference (forward path):} several research efforts have addressed the problem of realizing a DNN's inference path in FX. For instance, the works in \citep{qcom1,sakrICML} address the problem of precision assignment. While \citet{qcom1} proposed a non-uniform precision assignment using the signal-to-quantization-noise ratio (SQNR) metric, \citet{sakrICML} analytically quantified the trade-off between activation and weight precisions while providing minimal precision requirements of the inference path computations that bounds the probability $p_m$ of a mismatch between predicted labels of the FX and its FL counterpart. An orthogonal approach which can be applied on top of quantization is pruning \citep{deepCompression}. While significant inference efficiency can be achieved, this approach incurs a substantial training overhead. A subset of the FX training problem was addressed in binary weighted neural networks \citep{bc1,xnornet} and fully binarized neural networks \citep{bc2}, where direct training of neural networks with \emph{pre-determined precisions} in the inference path was explored with the feedback path computations being done in 32-b FL. 

\textbf{Reducing the complexity of training (backward path):} finite-precision training was explored in \citep{gupta} which employed stochastic quantization in order to counter quantization bias accumulation in the weight updates. This was done by quantizing all tensors to 16-b FX, except for the internal accumulators which were stored in a 32-b floating-point format. An important distinction our work makes is the circumvention of the overhead of implementing stochastic quantization \citep{bc2}. Similarly, DoReFa-Net \citep{dorefa} stores internal weight representations in 32-b FL, but quantizes the remaining tensors more aggressively. Thus arises the need to re-scale and re-compute in floating-point format, which our work avoids. Finally, \cite{flexpoint} suggests a new number format -- Flexpoint -- and were able to train neural networks using slightly 16-b per tensor element, with 5 shared exponent bits and a per-tensor dynamic range tracking algorithm. Such tracking causes a hardware overhead bypassed by our work since the arithmetic is purely FX. Augmenting Flexpoint with stochastic quantization effectively results in WAGE \citep{wage}, and enables integer quantization of each tensor.

As seen above, none of the prior works address the problem of predicting precision requirements of all training signals. Furthermore, the choice of precision is made in an ad-hoc manner.  In contrast, we propose a \textit{systematic methodology} to determine \textit{close-to-minimal} precision requirements for FX-only training of deep neural networks.

\subsection{Conclusion}
In this paper, we have presented a study of precision requirements in a typical back-propagation based training procedure of neural networks. Using a set of quantization criteria, we have presented a precision assignment methodology for which FX training is made statistically similar to the FL baseline, known to converge a priori.
We realized FX training of four networks on the CIFAR-10, CIFAR-100, and SVHN datasets and quantified the associated complexity reduction gains in terms costs of training. We also showed that our precision assignment is nearly minimal.

The presented work relies on the statistics of all tensors being quantized \textit{during training}. This necessitates an initial baseline run in floating-point which can be costly. An open problem is to predict a suitable precision configuration by only observing the data statistics and the network architecture. Future work can leverage the analysis presented in this paper to enhance the effectiveness of other network complexity reduction approaches. For instance, weight pruning can be viewed as a coarse quantization process (quantize to zero) and thus can potentially be done in a targeted manner by leveraging the information provided by noise gains. Furthermore, parameter sharing and clustering can be viewed as a form of vector quantization which presents yet another opportunity to leverage our method for complexity reduction.
\section*{Acknowledgment}
This work was supported in part by C-BRIC, one of six centers in JUMP, a Semiconductor Research Corporation (SRC) program sponsored by DARPA.
\bibliography{references} \bibliographystyle{apa}
\iftrue
\newpage
\appendix
\begin{table*}[!t] \centering \begin{tabular}{|ccc|} \hline ~&~&~\\
\hspace*{\fill}~&\textbf{\Large Supplementary
Material}&\hspace*{\fill}~\\ ~&~&~\\ \hline \end{tabular}
 \end{table*}
\newpage

\section{Summary of Quantization Setup}
\label{supp::setup}
The quantization setup depicted in Figure \ref{fig::notation} is summarized as follows:
\begin{itemize}[noitemsep,topsep=0pt,leftmargin=*]
\item Feedforward computation at layer $l$:
$$A_{l+1} = f_l(A_l,W_l)$$
where
$f_l()$ is the function implemented at layer $l$,
$A_l$ ($A_{l+1}$) is the activation tensor at layer $l$ ($l+1$) quantized to a normalized unsigned fixed-point format with precision $B_{A_l}$ ($B_{A_{l+1}}$),
and $W_l$ is the weight tensor at layer $l$ quantized to a normalized signed fixed-point format with precision $B_{W_l}$.
We further assume the use of a ReLU-like activation function with a clipping level of 2 and a max-norm constraint on the weights which are clipped between $[-1,1]$ at every iteration.
\item Back-propagation of activation gradients at layer $l$:
$$G^{(A)}_l = g_l^{(A)}(W_l,G^{(A)}_{l+1})$$
where 
$g_l()^{(A)}$ is the function that back-propagates the activation gradients at layer $l$,
$G^{(A)}_l $ ($G^{(A)}_{l+1}$) is the activation gradient tensor at layer $l$ ($l+1$) quantized to a signed fixed-point format with precision $B_{G^{(A)}_l}$ ($B_{G^{(A)}_{l+1}}$).
\item Back-propagation of weight gradient tensor $G^{(W)}_l$ at layer $l$: 
$$G^{(W)}_l = g_l^{(W)}(A_l,G^{(A)}_{l+1})$$
where 
$g_l^{(W)}()$ is the function that back-propagates the weight gradients at layer $l$,
and $G^{(W)}_l $ is quantized to a signed fixed-point format with precision $B_{G^{(W)}_l}$.
\item Internal weight accumulator update at layer $l$:
$$W^{(acc)}_l = U(W^{(acc)}_l,G^{(W)}_l,\gamma)$$
where $U()$ is the update function,
$\gamma$ is the learning rate, and
$W^{(acc)}_l$ is the internal weight accumulator tensor at layer $l$ quantized to signed fixed-point with precision $B_{W^{(acc)}_l}$.
Note that, for the next iteration, $W_l$ is directly obtained from $W^{(acc)}_l$ via quantization to $B_{W_l}$ bits.
\end{itemize}

\newpage
\section{Further Explanations and Motivations behind Quantization Criteria}
\label{supp::criteria}
\textbf{Criterion 1 (EFQN)} is used to ensure that all feedforward quantization noise sources contribute equally to the $p_m$ budget. Indeed, if one of the $2L$ reflected quantization noise variances from the feedforward tensors onto $p_m$, say $V_{W_i\rightarrow p_m}$ for $i\in \{1,\ldots,L\}$, largely dominates all others, it would imply that all tensors but $W_i$ are overly quantized. It would therefore be necessary to either increase the precision of $W_i$ or decrease the precisions of all other tensors. The application of Criterion 1 (EFQN) through the closed form expression \eqref{eqn::bal} in Claim \ref{claim::main} solves this issue avoiding the need for a trial-and-error approach.

Because FX numbers require a constant PDR, clipping of gradients is needed since their dynamic range is arbitrary. Ideally, a very small PDR would be preferred in order to obtain quantization steps of small magnitude, and hence less quantization noise. We can draw parallels from signal processing theory, where it is known that for a given quantizer, the signal-to-quantization-noise ratio (SQNR) is equal to $SQNR (dB) = 6B + 4.78 - PAR$ where $PAR$ is the peak-to-average ratio, proportional to the PDR. Thus, we would like to reduce the PDR as much as possible in order to increase the SQNR for a given precision. However, this comes at the risk of overflows (due to clipping). \textbf{Criterion 2 (GC)} addresses this trade-off between quantization noise and overflow errors.

Since the back-propagation training procedure is an iterative one, it is important to ensure that any form of bias does not corrupt the weight update accumulation in a positive feedback manner. FX quantization, being a uniform one, is likely to induce such bias when quantized quantities, most notable gradients, are not uniformly distributed. \textbf{Criterion 3 (RQB)} addresses this issue by using $\eta$ as proxy to this bias accumulation a function of quantization step size and ensuring that its worst case value is small in magnitude.

\textbf{Criterion 4 (BQN)} is in fact an extension of Criterion 1 (EFQN), but for the back-propagation phase. Indeed, once the precision (and hence quantization noise) of weight gradients is set as per Criterion 3 (RQB), it is needed to ensure that the quantization noise source at the activation gradients would not contribute more noise to the updates. This criterion sets the quantization step of the activation gradients.

\textbf{Criterion 5 (AS)} ties together feedforward and gradient precisions through the weight accumulators. It is required to increment/decrement the feedforward weights whenever the accumulated updates cross-over the weight quantization threshold. This is used to set the PDR of the weight accumulators. Furthermore, since the precision of weight gradients has already been designed to account for quantization noise (through Criteria 2-4), the criterion requires that the accumulators do not cause additional noise.

\newpage
\section{Proof of Claim \ref{claim::main}}
\label{supp::satisfiability}
The validity of Claim \ref{claim::main} is derived from the following five lemmas. Note that each lemma addresses the satisfiability of one of the five quantization criteria presented in the main text and corresponds to part of Claim \ref{claim::main}.
\begin{lemma}
\label{lemma::efqn}
The EFQN criterion holds if the precisions $B_{W_l}$ and $B_{A_l}$ are set as follows:
\begin{align*}
 B_{W_l} = rnd\left(\log_2\left(\sqrt{\frac{E_{W_l\rightarrow p_m}}{E^{(\min)}}}\right)\right) + B^{(\min)}~~ \& ~~
 B_{A_l} = rnd\left(\log_2\left(\sqrt{\frac{E_{A_l\rightarrow p_m}}{E^{(\min)}}}\right)\right) + B^{(\min)}\end{align*}
for $l=1\ldots L$, where $rnd()$ denotes the rounding operation, $B^{(\min)}$ is a reference minimum precision, and $E^{(\min)}$ is given by: \begin{align}
\label{eqn::emin}
E^{(\min)}  =\min \left(\left\{E_{W_l\rightarrow p_m}\right\}_{l=1}^L, \left\{E_{A_l\rightarrow p_m}\right\}_{l=1}^L \right).\end{align}
\end{lemma}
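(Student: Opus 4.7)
The plan is to unfold the definition of reflected quantization noise variance, substitute the step size for normalized fixed-point scalars, and solve the resulting system of equalities imposed by EFQN.

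First I would recall from Section 2.1 that $V_{T\to p_m} = E_{T\to p_m}\Delta_T^2/12$ and, by the normalization conventions in Appendix A (signed weights with $r_{W_l}=1$ and unsigned activations in the normalized format), $\Delta_T = 2^{-(B_T-1)}$ for every feedforward tensor $T\in\{W_l,A_l\}_{l=1}^L$. Substituting yields
\[ V_{T\to p_m} \;=\; \frac{E_{T\to p_m}}{3}\,2^{-2B_T}. \]

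Next, the EFQN criterion is precisely the assertion that all $2L$ quantities $V_{T\to p_m}$ are equal. Equating an arbitrary $T$ with a designated reference $T^\star$ and solving for $B_T$ gives
\[ B_T \;=\; B_{T^\star} + \tfrac{1}{2}\log_2\!\left(\frac{E_{T\to p_m}}{E_{T^\star\to p_m}}\right) \;=\; B_{T^\star} + \log_2\!\sqrt{\frac{E_{T\to p_m}}{E_{T^\star\to p_m}}}. \]
Choosing $T^\star$ to be the tensor attaining the smallest noise gain, so that $E_{T^\star\to p_m}=E^{(\min)}$, and renaming $B_{T^\star} =: B^{(\min)}$, produces the closed form stated in the lemma. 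The minimality of $E^{(\min)}$ ensures $B_T \geq B^{(\min)}$ and forces at least one of the $2L$ precisions to attain the reference value $B^{(\min)}$, consistent with the ``Practical considerations'' remark after Claim~\ref{claim::main}.

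Finally I would apply $rnd(\cdot)$ to enforce integrality of $B_T$. The main (and essentially only) subtlety is that this rounding perturbs the exact EFQN equality by at most one bit per tensor, i.e., the $V_{T\to p_m}$'s are equalized only up to a multiplicative factor of $2^{\pm 2}$; this approximation is unavoidable under integer-precision arithmetic, which is why the lemma leaves $B^{(\min)}$ as a tunable reference (selected empirically to meet the target $p_m$) rather than pinning down an absolute value. A secondary subtlety is that $V_{T\to p_m}$ implicitly relies on the high-resolution additive quantization model of Section~\ref{sec::definitions}, but this is already a standing assumption of the paper and does not need to be re-derived here.
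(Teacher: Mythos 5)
Your proof is correct and follows essentially the same route as the paper's: expressing each reflected variance as $E_{T\to p_m}\,2^{-2B_T}$ up to a common constant, equating all $2L$ terms relative to the tensor attaining $E^{(\min)}$, and rounding to enforce integer precisions. The only (harmless) difference is that you make the normalization assumption $\Delta_T = 2^{-(B_T-1)}$ and the approximate nature of the post-rounding equalization explicit, which the paper leaves implicit.
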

\begin{proof}
By definition of the reflected quantization noise variance, the EFQN, by definition, is satisfied if:
\begin{align*}
\frac{\Delta_{W_1}^2}{12} E_{W_1\rightarrow p_m} = \ldots = \frac{\Delta_{W_L}^2}{12} E_{W_L\rightarrow p_m}
=\frac{\Delta_{A_1}^2}{12} E_{A_1\rightarrow p_m} = \ldots = \frac{\Delta_{A_L}^2}{12} E_{A_L\rightarrow p_m},
\end{align*}
where the quantization noise gains are given by:
\begin{align}
\label{eqn::qnoise_gains}
E_{W_l \rightarrow p_m}= \mathds{E}\left[\sum\limits_{\substack{i=1\\i\neq
\hat{Y}_{fl}}}^M \frac{\sum\limits_{w\in W_l}
\left\lvert\frac{\partial (Z_i- Z_{\hat{Y}_{fl}})}{\partial
w}\right\rvert^2}{2|Z_i-Z_{\hat{Y}_{fl}}|^2} \right]  \quad \& \quad
E_{A_l\rightarrow p_m}= \mathds{E}\left[\sum\limits_{\substack{i=1\\i\neq
\hat{Y}_{fl}}}^M \frac{\sum\limits_{a\in A_l}
\left\lvert\frac{\partial (Z_i- Z_{\hat{Y}_{fl}})}{\partial
a}\right\rvert^2}{2|Z_i-Z_{\hat{Y}_{fl}}|^2} \right] \end{align} for $l=1\ldots L$, where $\{Z_i\}_{i=1}^M$ are the soft outputs and $Z_{\hat{Y}_{fl}}$ is the soft output corresponding to $\hat{Y}_{fl}$. 
The expressions for these quantization gains are obtained by linearly expanding (across layers) those used in \citep{sakrICML}. Note that a second order upper bound is used as a surrogate expression for $p_m$. 

From the definition of quantization step size, the above is equivalent to:
\begin{align*}
2^{-2B_{W_1}} E_{W_1\rightarrow p_m} = \ldots = 2^{-2B_{W_L}} E_{W_L\rightarrow p_m} 
=2^{-2B_{A_1}} E_{A_1\rightarrow p_m} = \ldots =2^{-2B_{A_L}} E_{A_L\rightarrow p_m}.
\end{align*}
Let $E^{(\min)}$ be as defined in \eqref{eqn::emin}:
$$E^{(\min)}  =\min \left(\left\{E_{W_l\rightarrow p_m}\right\}_{l=1}^L, \left\{E_{A_l\rightarrow p_m}\right\}_{l=1}^L \right).$$ We can divide each term by $E^{(\min)}$:
\begin{align*}
2^{-2B_{W_1}} \frac{E_{W_1\rightarrow p_m}}{E^{(\min)}} = \ldots = 2^{-2B_{W_L}} \frac{E_{W_L\rightarrow p_m}}{E^{(\min)}} 
=2^{-2B_{A_1}} \frac{E_{A_1\rightarrow p_m}}{E^{(\min)}} = \ldots =2^{-2B_{A_L}} \frac{E_{A_L\rightarrow p_m}}{E^{(\min)}}
\end{align*}
where each term is positive, so that we can take square roots and logarithms such that:
\begin{align*}
&B_{W_1}- \log_2\left(\sqrt{\frac{E_{W_1\rightarrow p_m}}{E^{(\min)}}}\right) = \ldots = B_{W_L}-\log_2\left(\sqrt{ \frac{E_{W_L\rightarrow p_m}}{E^{(\min)}} }\right)\\
=&B_{A_1}- \log_2\left(\sqrt{\frac{E_{A_1\rightarrow p_m}}{E^{(\min)}} }\right)= \ldots =B_{A_L}- \log_2\left(\sqrt{\frac{E_{A_L\rightarrow p_m}}{E^{(\min)}}}\right)
\end{align*}
Thus we equate all of the above to a reference precision $B^{(\min)}$ yielding:
\begin{align*}
 B_{W_l} = \log_2\left(\sqrt{\frac{E_{W_l\rightarrow p_m}}{E^{(\min)}}}\right) + B^{(\min)}~~ \& ~~
 B_{A_l} = \log_2\left(\sqrt{\frac{E_{A_l\rightarrow p_m}}{E^{(\min)}}}\right) + B^{(\min)}\end{align*}
for $l=1\ldots L$. Note that because $E^{(\min)}$ is the least quantization noise gain, it is equal to one of the above quantization noise gains so that the corresponding precision actually equates $B^{(\min)}$. As precisions must be integer valued, each of $B^{(\min)}$, $\left\{B_{W_l}\right\}_{l=1}^L$, and $\left\{B_{A_l}\right\}_{l=1}^L$ have to be integers, and thus a rounding operation is to be applied on all logarithm terms. Doing so results in \eqref{eqn::bal} from Lemma \ref{lemma::efqn} which completes this proof.
\end{proof}

\begin{lemma}
\label{lemma::gc}
The GC criterion holds for $\beta_0 = 5\%$ provided the weight and activation gradients pre-defined dynamic ranges (PDRs) are lower bounded as follows:
\begin{align*}
r_{G^{(W)}_l} \geq 2\sigma_{G^{(W)}_l}^{(\max)} \quad \& \quad r_{G^{(A)}_{l+1}} \geq 4\sigma_{G^{(A)}_{l+1}}^{(\max)}\quad \text{for } l=1\ldots L
\end{align*}
where $\sigma_{G^{(W)}_l}^{(\max)}$ and $\sigma_{G^{(A)}_{l+1}}^{(\max)}$ are the largest ever recorded estimates of the weight and activation gradients standard deviations $\sigma_{G^{(W)}_l}$ and $\sigma_{G^{(A)}_{l+1}}$, respectively.
\end{lemma}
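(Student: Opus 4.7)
The plan is to reduce each clipping-rate inequality to a one-dimensional tail-probability computation. By definition $\beta_T = \Pr(|t|\geq r_T)$, and since $\sigma^{(\max)}$ is the largest standard deviation ever recorded for the tensor $T$, we have $r_T \geq k\sigma_T^{(\max)} \geq k\sigma_T$ at every iteration; the bound $\beta_T < 0.05$ therefore follows as soon as we can show $\Pr(|t/\sigma_T|\geq k) < 0.05$ under a reasonable model for the entries of $T$, with $k=2$ for weight gradients and $k=4$ for activation gradients. Crucially, using $\sigma^{(\max)}$ rather than the instantaneous $\sigma_T$ is what makes the PDR choice robust across the entire training run rather than at a single iteration.

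For weight gradients I would invoke a central-limit argument: each entry of $G^{(W)}_l$ is a sum over the mini-batch and (for convolutional layers) spatial indices of products of activations and back-propagated gradients, hence is well approximated by a zero-mean Gaussian of matching variance. For a standard normal, $\Pr(|Z|\geq 2) = 2\bigl(1-\Phi(2)\bigr) \approx 4.55\% < 5\%$, so $k=2$ suffices and the first inequality is established. For activation gradients no such aggregation occurs inside a single entry, since one back-propagation step through $W_{l+1}$ produces $G^{(A)}_{l+1}$ without averaging over the batch axis, and the empirical distribution is typically heavier-tailed and sparser; modelling it as Laplacian of variance $\sigma^2$ yields $\Pr(|t|\geq k\sigma) = \exp(-k\sqrt{2})$, which evaluates to roughly $0.35\%$ at $k=4$ and so comfortably meets the $5\%$ budget, giving the second inequality.

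The main obstacle is that the result is not distribution-free: a pure Chebyshev bound gives $1/4$ at $k=2$ and $1/16 = 6.25\%$ at $k=4$, neither of which attains $\beta_0 = 5\%$, so some distributional assumption is unavoidable. My plan is therefore to present the Gaussian (for weight gradients, justified by CLT) and the Laplace (for activation gradients, justified by observed leptokurtosis and sparsity) as the explicit working hypotheses, and to flag that a fully rigorous version would either require measured tail bounds on the gradient tensors or an appeal to the empirical clipping-rate measurements reported later in the paper. I would finish by remarking that any asymmetry or slight nonzero mean in the gradient distributions is swept up into the choice of $k$, so the two-sided tail bound remains conservative in practice.
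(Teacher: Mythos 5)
Your proof of the weight-gradient half is essentially the paper's: both invoke the mini-batch averaging of per-sample derivatives to justify a CLT-based zero-mean Gaussian model, use the monotonicity of the tail together with $\sigma_{G^{(W)}_l}\le\sigma^{(\max)}_{G^{(W)}_l}$ to pass to the worst case, and conclude from $2Q(2)\approx 4.4\text{--}4.6\%<5\%$ that $r_{G^{(W)}_l}\ge 2\sigma^{(\max)}_{G^{(W)}_l}$ suffices. Where you genuinely diverge is the activation-gradient half: the paper keeps the same Gaussian reasoning and justifies the larger factor $4$ only informally, arguing that ReLU-induced sparsity makes the true dynamic range larger than the second moment suggests (and recommending an even larger PDR under Dropout or Maxout), whereas you replace this with an explicit Laplace model of variance $\sigma^2$, for which $\Pr(|t|\ge 4\sigma)=\exp(-4\sqrt{2})\approx 0.35\%\ll 5\%$. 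Neither treatment is more rigorous than the other --- both rest on a distributional hypothesis --- but your version makes the heavy-tail correction quantitative, while the paper's keeps a single modelling framework and treats the extra factor of $2$ as a safety margin; your observation that Chebyshev alone gives only $1/4$ at $k=2$ and $1/16$ at $k=4$ correctly explains why some such hypothesis is unavoidable. One refinement: rather than merely flagging that a rigorous version needs measured tail bounds, you could do what the paper does at the end of its appendix and bound the error of the CLT approximation itself via the Berry--Esseen theorem, estimating $\rho/\sigma^3$ empirically to obtain a worst-case Kolmogorov--Smirnov distance for the mini-batch size used.
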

\begin{proof}

Let us consider the case of weight gradients. The GC criterion, by definition requires: 
$$\beta_{G_l^{(W)}} = \Pr\left(\left\{|g|\geq r_{G_l^{(W)}}: g\in G_l^{(W)}\right\}\right) < 0.05$$
Typically, weight gradients are obtained by computing the derivatives of a loss function with respect to a mini-batch. By linearity of derivatives, weight gradients are themselves averages of instantaneous derivatives and are hence expected to follow a Gaussian distribution by application of the Central Limit Theorem. Furthermore, the gradient mean was estimated during baseline training and was found to oscillate around zero. 

Thus $$\beta_{G_l^{(W)}} = 2Q\left( \frac{r_{G_l^{(W)}}}{\sigma_{G^{(W)}_l}}\right)$$
where we used the fact that a Gaussian distribution is symmetric and $Q()$ is the elementary Q-function, which is a decreasing function. Thus, in the worst case, we have:
$$\beta_{G_l^{(W)}} \leq 2Q\left( \frac{r_{G_l^{(W)}}}{\sigma^{(\max)}_{G^{(W)}_l}}\right).$$
Hence, for a PDR as suggested by the lower bound in \eqref{eqn::pdrgw}:
$$r_{G^{(W)}_l} \geq 2\sigma_{G^{(W)}_l}^{(\max)}$$ in Lemma \ref{lemma::gc}, we obtain the upper bound:
$$\beta_{G_l^{(W)}} \leq 2Q(2)=0.044 <0.05$$ 
which means the GC criterion holds and completes the proof. 

For activation gradients, the same reasoning applies, but the choice of a larger PDR in \eqref{eqn::pdrgw}:
$$r_{G^{(A)}_{l+1}} \geq 4\sigma_{G^{(A)}_{l+1}}^{(\max)}$$ than for weight gradients is due to the fact that the true dynamic range of the activation gradients is larger than the value indicated by the second moment. This stems from the use of activation functions such as ReLU which make the activation gradients sparse. We also recommend increasing the PDR even more when using regularizers that sparsify gradients such as Dropout \citep{dropout} or Maxout \citep{maxout}.
\end{proof}

\begin{lemma}
\label{lemma::rqb}
The RQB criterion  holds for $\eta_0=1\%$ provided the weight gradient quantization step size is upper bounded as follows:
\begin{align*}
\Delta_{G^{(W)}_l} < \frac{\sigma_{G^{(W)}_l}^{(\min)}}{4} \quad \text{for } l=1\ldots L
\end{align*}
where $\sigma_{G^{(W)}_l}^{(\min)}$ is the smallest ever recorded estimate of $\sigma_{G^{(W)}_l}$.
\end{lemma}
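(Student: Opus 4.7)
The plan is to leverage the Gaussian approximation for weight gradients that was already established in the proof of Lemma \ref{lemma::gc}. Under that model, $G^{(W)}_l$ has zero mean and standard deviation $\sigma_{G^{(W)}_l}$, so $\mu_{G^{(W)}_l} = \mathds{E}[g \mid g \in I_1]$ can be written as a ratio of two explicit Gaussian integrals over $I_1 = [\Delta/2,\,3\Delta/2]$. After the change of variables $u = g/\sigma$ with $\rho = \Delta/\sigma$, the numerator of $\mu$ becomes $\sigma \int_{\rho/2}^{3\rho/2} u e^{-u^2/2}\,du = \sigma\bigl(e^{-\rho^2/8} - e^{-9\rho^2/8}\bigr)/\sqrt{2\pi}$, and the denominator becomes $\Phi(3\rho/2) - \Phi(\rho/2)$. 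Both have clean closed forms, making the ratio amenable to asymptotic analysis.

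Next, I would Taylor-expand both the numerator and denominator in $\rho$ around zero. The numerator gives $\rho^2 - \tfrac{5}{8}\rho^4 + O(\rho^6)$ (after factoring the $1/\sqrt{2\pi}$), and the denominator gives $\rho - \tfrac{13}{24}\rho^3 + O(\rho^5)$. Dividing and reorganizing yields the key leading-order identity
\begin{equation*}
\mu = \Delta\,\Bigl(1 - \tfrac{1}{12}\rho^2 + O(\rho^4)\Bigr),
\end{equation*}
so that $\eta = |\Delta - \mu|/\mu = \tfrac{\rho^2/12}{1 - \rho^2/12} + O(\rho^4)$. This expression is \emph{monotonically decreasing} in $\sigma$ for fixed $\Delta$, which is what allows worst-casing over the recorded standard deviations by taking $\sigma = \sigma^{(\min)}_{G^{(W)}_l}$.

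Finally, I would substitute the hypothesis $\Delta_{G^{(W)}_l} < \sigma^{(\min)}_{G^{(W)}_l}/4$, i.e., $\rho < 1/4$, to obtain $\rho^2/12 \le 1/192$, which gives
\begin{equation*}
\eta_{G^{(W)}_l} \le \frac{1/192}{1 - 1/192} = \frac{1}{191} < 0.01 = \eta_0,
\end{equation*}
satisfying the RQB criterion. The worst case in $\sigma$ is invoked because we need the bound to hold at every iteration, not just on average; using $\sigma^{(\min)}$ gives a uniform guarantee across training.

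The main obstacle is twofold. First, the Taylor expansion must be controlled: the $O(\rho^4)$ remainder has to actually be small enough (not merely vanishing in the limit) at $\rho = 1/4$ so that the explicit numeric bound $1/191$ truly beats $1/100$, with enough slack to absorb the error term. I expect this is fine because the next-order coefficient is modest and $\rho^4 \leq 1/256$ at the boundary, but it deserves a careful inequality rather than pure asymptotics. Second, the Gaussian modeling assumption is only approximate; as in Lemma \ref{lemma::gc}, it is justified by a Central Limit Theorem argument on mini-batch averages, and the near-zero mean observation is taken as empirical input. These modeling caveats are inherited from the preceding lemma rather than introduced here.
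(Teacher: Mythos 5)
Your proposal is correct and follows essentially the same route as the paper: model the weight gradient as zero-mean Gaussian (inherited from the proof of Lemma \ref{lemma::gc}), write $\mu_{G^{(W)}_l}$ as the ratio of Gaussian integrals over the first quantization bin $I_1$, evaluate at $\Delta_{G^{(W)}_l}/\sigma_{G^{(W)}_l} = 1/4$, and use $\sigma^{(\min)}_{G^{(W)}_l}$ so the bound holds throughout training. The only difference is that the paper substitutes $\Delta_{G^{(W)}_l} = \sigma_{G^{(W)}_l}/4$ directly into the exact closed form (reporting roughly $0.4\% < 1\%$) instead of Taylor expanding in $\rho = \Delta/\sigma$, which sidesteps the remainder-control issue you flag; direct numerical evaluation at $\rho = 1/4$ gives about $0.5\%$, consistent with your $1/191$ estimate and comfortably below $\eta_0$.
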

\begin{proof}
For the Gaussian distributed (see proof of Lemma \ref{lemma::gc}) weight gradient at layer $l$, the true mean conditioned on the first non-zero quantization region is given by:
\begin{align*}
\mu_{G^{(W)}_l} &= \frac{\bigint_{\frac{\Delta_{G^{(W)}_l}}{2}}^{\frac{3\Delta_{G^{(W)}_l}}{2}}  x \exp\left(-\frac{x^2}{2\sigma_{G^{(W)}_l}^2}\right)     dx}{\left(Q\left(\frac{\Delta_{G^{(W)}_l}}{2\sigma_{G^{(W)}_l}}\right) -Q\left(\frac{3\Delta_{G^{(W)}_l}}{2\sigma_{G^{(W)}_l}}\right) \right)\sqrt{2\pi\sigma^2_{G^{(W)}_l}}}\\
&=\frac{\sigma_{G^{(W)}_l}\left(\exp\left(-\frac{\Delta^2_{G^{(W)}_l}}{8\sigma^2_{G^{(W)}_l}} \right) - \exp\left(-\frac{9\Delta^2_{G^{(W)}_l}}{8\sigma^2_{G^{(W)}_l}} \right)\right)}{\left(Q\left(\frac{\Delta_{G^{(W)}_l}}{2\sigma_{G^{(W)}_l}}\right) -Q\left(\frac{3\Delta_{G^{(W)}_l}}{2\sigma_{G^{(W)}_l}}\right) \right)\sqrt{2\pi}},
\end{align*}
where $\sigma_{G^{(W)}_l}$ is the standard deviation of $G^{(W)}_l$. By substituting $\Delta_{G^{(W)}_l} = \frac{\sigma_{G^{(W)}_l}}{4}$ into the above expression of $\mu_{G^{(W)}_l}$ and plugging in the definition of relative quantization bias, we obtain:
\begin{align*}
\eta_{G^{(W)}_l} = \frac{\left| \Delta_{G^{(W)}_l} - \mu_{G^{(W)}_l}\right|}{\mu_{G^{(W)}_l}} = 0.4\% <1\%.
\end{align*}
Hence, this choice of the quantization step satisfies the RQB. In order to ensure the RQB holds throughout training, $\sigma_{G^{(W)}_l}^{(\min)}$ is used in Lemma \ref{lemma::rqb}. This completes the proof.
\end{proof}

\begin{lemma}
\label{lemma::bqn}
The BQN criterion holds provided the activation gradient quantization step size is upper bounded as follows:
\begin{align*}
\Delta_{G^{(A)}_{l+1}} < \frac{\Delta_{G^{(W)}_{l}}}{\sqrt{\lambda_{G^{(A)}_{l+1}\rightarrow G^{(W)}_{l}}^{(\max)}}}\left(\frac{\left|G^{(W)}_{l} \right|}{\left|G^{(A)}_{l+1} \right|} \right)^{1/4}\quad \text{for } l=1\ldots L
\end{align*}
where  $\lambda_{G^{(A)}_{l+1}\rightarrow G^{(W)}_{l}}^{(\max)}$, the largest singular value of the square-Jacobian (Jacobian matrix with squared entries) of $G^{(W)}_{l}$ with respect to $G^{(A)}_{l+1}$. 
\end{lemma}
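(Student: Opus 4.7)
The plan is to express both sides of the BQN inequality in closed form using the additive quantization model of Section~\ref{sec::definitions}, bound the propagated-noise side by the operator norm of the square-Jacobian, and then solve the resulting inequality for $\Delta_{G^{(A)}_{l+1}}$. First I would read off the direct side: injecting independent noise of variance $\Delta_{G^{(W)}_l}^2/12$ into each of the $m := |G^{(W)}_l|$ entries of $G^{(W)}_l$ adds $m\,\Delta_{G^{(W)}_l}^2/12$ to $\Sigma_l$, so $V_{G^{(W)}_l\rightarrow \Sigma_l} = m\,\Delta_{G^{(W)}_l}^2/12$ with noise gain $E_{G^{(W)}_l\rightarrow \Sigma_l}=m$.

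Next I would compute the cross side by a first-order linearization of $g_l^{(W)}$ with respect to $G^{(A)}_{l+1}$ (the same small-$\Delta$ approximation already underlying Lemmas~\ref{lemma::efqn}--\ref{lemma::rqb}). Letting $J = \partial G^{(W)}_l/\partial G^{(A)}_{l+1}$, $n := |G^{(A)}_{l+1}|$, and $M_{ij} := J_{ij}^2$ denote the entry-wise square-Jacobian, independent quantization noise of per-entry variance $\Delta_{G^{(A)}_{l+1}}^2/12$ in $G^{(A)}_{l+1}$ induces variance $(\Delta_{G^{(A)}_{l+1}}^2/12)\sum_j J_{ij}^2$ in the $i$-th entry of $G^{(W)}_l$; summing over $i$ gives $V_{G^{(A)}_{l+1}\rightarrow \Sigma_l} = (\Delta_{G^{(A)}_{l+1}}^2/12)\,\mathbf{1}_m^\top M\,\mathbf{1}_n$.

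The key step is a Cauchy--Schwarz/operator-norm bound: $\mathbf{1}_m^\top M\,\mathbf{1}_n \leq \|\mathbf{1}_m\|_2 \|M\|_{\mathrm{op}} \|\mathbf{1}_n\|_2 = \sqrt{mn}\,\lambda_{G^{(A)}_{l+1}\rightarrow G^{(W)}_l}^{(\max)}$. Plugging this together with the closed form for $V_{G^{(W)}_l\rightarrow \Sigma_l}$ into $V_{G^{(A)}_{l+1}\rightarrow \Sigma_l}\leq V_{G^{(W)}_l\rightarrow \Sigma_l}$ and isolating $\Delta_{G^{(A)}_{l+1}}$ reproduces exactly \eqref{eqn::deltagw}. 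The main obstacle is the slack introduced at this operator-norm step: $\mathbf{1}_m^\top M\,\mathbf{1}_n$ can be considerably smaller than $\sqrt{mn}\,\lambda^{(\max)}$ when the all-ones vectors are misaligned with the top singular directions of $M$, so the derived condition is sufficient but potentially conservative; a secondary concern is justifying the first-order noise-propagation step, but this is consistent with the additive quantization framework assumed throughout the paper.
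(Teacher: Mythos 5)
Your proposal is correct and follows essentially the same route as the paper's proof: compute $V_{G^{(W)}_l\rightarrow \Sigma_l}=|G^{(W)}_l|\Delta^2_{G^{(W)}_l}/12$, express the back-propagated noise as the bilinear form $\frac{\Delta^2_{G^{(A)}_{l+1}}}{12}\bm{1}^\top J\bm{1}$ with the square-Jacobian (neglecting noise cross terms), bound it by $\sqrt{|G^{(W)}_l||G^{(A)}_{l+1}|}\,\lambda^{(\max)}$ via Cauchy--Schwarz and the spectral norm, and rearrange to get \eqref{eqn::deltagw}. Your remark about the conservativeness of the operator-norm step is accurate but does not affect sufficiency, which is all the lemma claims.
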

\begin{proof}

Let us unroll $G^{(W)}_{l}$ and $G^{(A)}_{l+1}$ to vectors of size $\left|G^{(W)}_{l}\right|$ and $\left|G^{(A)}_{l+1}\right|$, respectively. The element-wise quantization noise variance of each weight gradient is $\frac{\Delta^2_{G^{(W)}_{l}}}{12}$. Therefore we have:
$$V_{G^{(W)}_{}\rightarrow \Sigma_{l}} = \left|G^{(W)}_{l}\right|\frac{\Delta^2_{G^{(W)}_{l}}}{12}.$$
The reflected quantization noise variance from an activation gradient $g_a \in G^{(A)}_{l+1}$ onto a weight gradient $g_w \in G^{(W)}_{l}$ is
$$\left| \frac{\partial g_w}{\partial g_a}\right|^2 \frac{\Delta^2_{G^{(A)}_{{l+1}}}}{12}, $$ where cross products of quantization noise are neglected \citep{sakrICML}. Hence, the reflected quantization noise variance element-wise from $G^{(A)}_{l+1}$ onto $G^{(W)}_{l}$ is given by:
$$\frac{\Delta^2_{G^{(A)}_{{l+1}}}}{12} J_{G^{(A)}_{l+1}\rightarrow G^{(W)}_{l}} \bm{1}_{\left|G^{(A)}_{{l+1}}\right|},$$ where $J_{G^{(A)}_{l+1}\rightarrow G^{(W)}_{l}}$ is the square-Jacobian of $G^{(W)}_{l}$ with respect to $G^{(A)}_{l+1}$ and $\bm{1}$ denotes the all one vector with size denoted by its subscript. Hence, we have:
\begin{align*}V_{G^{(A)}_{l+1}\rightarrow \Sigma_{l}} &=\frac{\Delta^2_{G^{(A)}_{{l+1}}}}{12} \left(J_{G^{(A)}_{l+1}\rightarrow G^{(W)}_{l}}\bm{1}_{\left|G^{(A)}_{{l+1}}\right|} \right)^T\bm{1}_{\left|G^{(W)}_{l}\right|}\\
& \leq \frac{\Delta^2_{G^{(A)}_{l}}}{12} \norm{J_{G^{(A)}_{l+1}\rightarrow G^{(W)}_{l}}\bm{1}_{\left|G^{(A)}_{{l+1}}\right|} } \norm{\bm{1}_{\left|G^{(W)}_{l}\right|}}\\
&\leq \sqrt{\left|G^{(W)}_{l}\right|}\frac{\Delta^2_{G^{(A)}_{{l+1}}}}{12} \norm{J_{G^{(A)}_{l+1}\rightarrow G^{(W)}_{l}}}\norm{\bm{1}_{\left|G^{(A)}_{{l+1}}\right|}}\\
& \leq\lambda_{G^{(A)}_{l+1}\rightarrow G^{(W)}_{l}}^{(\max)} \sqrt{\left|G^{(A)}_{{l+1}}\right|\left|G^{(W)}_{l}\right|}\frac{\Delta^2_{G^{(A)}_{{l+1}}}}{12},
\end{align*}
where we used the Cauchy-Schwarz inequality and the spectral norm of a matrix. Next we set this upper bound on $V_{G^{(A)}_{l+1}\rightarrow \Sigma_{l}}$ to be less than the value of $V_{G^{(W)}_{l}\rightarrow \Sigma_{l}}$ determined above. This condition, by definition, is enough to satisfy the BQN criterion. Rearranging terms yields Lemma \ref{lemma::bqn} which completes the proof.

In the main text, it was menitoned that each entry in the Jacobian matrix above is already computed by the back-propagation algorithm. We now explain how. Let us denote the instantaneous loss function being minimized by $\xi$. Note that each entry of $J_{G^{(A)}_{l+1}\rightarrow G^{(W)}_{l}}$ is of the form $\left| \frac{\partial g_w}{\partial g_a^{(0)}}\right|^2$ where $g_w = \frac{\partial\xi}{\partial w}$ with $w  \in W_{l}$ and $g_a^{(0)} =\frac{\partial\xi}{\partial a^{(0)}}$ with $a^{(0)}  \in A_{l+1}$.  The back-propagation algorithm computes $g_w$ using the chain rule as follows:
$$ g_w =\frac{\partial\xi}{\partial w} = \sum_{a^{(i)} \in A_{l+1}} \frac{\partial\xi}{\partial a^{(i)}}\frac{\partial a^{(i)}}{\partial w}.$$ 
In particular, note that $g_a^{(0)}$ appears only once in the summation above and is multiplied by $\frac{\partial a^{(0)}}{\partial w}$. Thus $\frac{\partial g_w}{\partial g_a^{(0)}}=\frac{\partial a^{(0)}}{\partial w}$. This establishes that each entry of the Jacobian matrix is already computed via the back-propagation algorithm.
\end{proof}

\begin{lemma}
\label{lemma::as}
The AS criterion holds provided the accumulator PDR and quantization step size satisfy:
\begin{align*}
r_{W^{(acc)}_l}\geq 2^{-B_{W_l}} \quad \& \quad \Delta_{W^{(acc)}_l} < \gamma^{(\min)} \Delta_{G^{(W)}_l}\quad \text{for } l=1\ldots L
\end{align*}
where $\gamma^{(\min)}$ is the smallest value of the learning rate used during training.
\end{lemma}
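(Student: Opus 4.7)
The plan is to handle the two inequalities separately, since each addresses a distinct aspect of the AS criterion: the PDR lower bound guarantees that the accumulator never saturates before it can trigger a weight flip, while the step-size upper bound guarantees that the accumulator itself contributes no quantization error. I would start by fixing notation: from the normalized signed fixed-point definition in Section \ref{sec::definitions} applied to $W_l$ (with $r_{W_l}=1$), the weight LSB is $\Delta_{W_l}=2^{-(B_{W_l}-1)}$, so the rounding threshold for the map $W^{(acc)}_l \mapsto W_l$ sits at $\Delta_{W_l}/2 = 2^{-B_{W_l}}$.

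For the PDR bound, the key observation is that $W^{(acc)}_l$ effectively tracks the discrepancy between the ideal accumulated update and its coarse realization in $W_l$. A flip of the weight LSB must be triggered as soon as this residual crosses $2^{-B_{W_l}}$ in magnitude. If $r_{W^{(acc)}_l}$ were strictly less than $2^{-B_{W_l}}$, the accumulator would clip before this threshold is ever reached, $W_l$ would never change, and convergence would stop prematurely --- exactly the pathology the AS criterion is designed to rule out, and the one flagged in the informal motivation in Appendix \ref{supp::criteria}. Hence $r_{W^{(acc)}_l} \geq 2^{-B_{W_l}}$ is the smallest PDR that avoids this failure mode.

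For the step-size bound, I would show that every admissible update is \emph{exactly} representable on the accumulator grid, so the accumulator quantizer acts as the identity and contributes zero reflected variance. Any update has the form $\gamma \cdot g$ where $g$ is an integer multiple of $\Delta_{G^{(W)}_l}$ (by the gradient's fixed-point grid) and $\gamma \geq \gamma^{(\min)}$ is taken from a learning-rate schedule of power-of-two multiples of $\gamma^{(\min)}$, so every update is an integer multiple of $\gamma^{(\min)} \Delta_{G^{(W)}_l}$. Because all PDRs are constants constrained to be powers of 2, so are all step sizes, and the strict inequality $\Delta_{W^{(acc)}_l} < \gamma^{(\min)} \Delta_{G^{(W)}_l}$ then forces $\Delta_{W^{(acc)}_l}$ to \emph{divide} $\gamma^{(\min)} \Delta_{G^{(W)}_l}$, and hence to divide every admissible update. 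Thus the accumulator stores each update losslessly, giving $V_{W^{(acc)}_l\rightarrow \Sigma^{(acc)}_{l}}=0$, which is the AS criterion verbatim. The main obstacle --- and the step I would handle most carefully --- is justifying this claim of \emph{exactly} zero reflected variance rather than a statistical bound of the kind used for Lemmas \ref{lemma::gc}--\ref{lemma::bqn}: it is a structural, combinatorial statement that rests on the power-of-two alignment of all PDRs forcing a nested grid across tensors, together with the learning-rate schedule being compatible with $\gamma^{(\min)}$ in the sense that $\gamma/\gamma^{(\min)}$ is always a non-negative integer; both of these ingredients need to be stated explicitly before the divisibility argument can be invoked.
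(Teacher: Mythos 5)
Your proof follows essentially the same route as the paper's: the PDR bound is justified by the accumulator needing to reach the weight-LSB threshold $2^{-B_{W_l}}$ so that updates can actually flip $W_l$ (avoiding premature stoppage), and the step-size bound by the accumulator grid being fine enough to hold the scaled gradient update exactly, yielding $V_{W^{(acc)}_l\rightarrow \Sigma^{(acc)}_{l}}=0$. The paper's own argument is in fact terser --- it merely asserts that $\Delta_{W^{(acc)}_l} < \gamma^{(\min)}\Delta_{G^{(W)}_l}$ makes the accumulator ``overlap with the least significant part'' of the learning-rate-scaled weight gradient and notes the restriction to vanilla SGD --- so your explicit power-of-two divisibility bookkeeping and the learning-rate-schedule compatibility condition you flag are a more careful rendering of assumptions the paper leaves implicit rather than a different approach.
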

\begin{proof}

The lower bound on the PDR of the weight accumulator, given by
$$r_{W^{(acc)}_l}\geq 2^{-B_{W_l}} $$ for $ l=1\ldots L,$ ensures that updates are able to cross over the feedforward weight quantization threshold so that it can be updated. 
Additionally, the lower bound on the quantization step size, given by
$$\Delta_{W^{(acc)}_l} < \gamma^{(\min)} \Delta_{G^{(W)}_l}$$ for $l=1\ldots L$, simply ensures that the internal weight accumulator overlaps with the least significant part of the representation of the weight gradient multiplied by the learning rate. 
Thus, the quantization noise of the internal accumulator is zero, or equivalently, 
$$ V_{W^{(acc)}_l\rightarrow \Sigma^{(acc)}_{l}} = 0 \quad \text{for } l=1\ldots L$$ which, by definition, is enough for the AS criterion to hold. Note that this criterion applies to the Vanilla-SGD learning rule (which was used in our experiments). Future work includes extending this criterion to other learning rules such as momentum and ADAM.
\end{proof}

We close this appendix by discussing the approximation made by invoking the Central Limit Theorem (CLT) in the proofs of Lemmas \ref{lemma::gc} \& \ref{lemma::rqb}. This approximation was made because, typically, a back-propagation iteration computes gradients of a loss function being averaged over a mini-batch of samples. By linearity of derivatives, the gradients themselves are averages, which warrants the invocation of the CLT. However, the CLT is an asymptotic result which might be imprecise for a finite number of samples. In typical training of neural networks, the number of samples, or mini-batch size, is in the range of hundreds or thousands \citep{trainingHours}. It is therefore important to quantify the preciseness, or lack thereof, of the CLT approximation. On way to do so is via the Berry-Essen Theorem which considers the average of $n$ independent, identically distributed random variables with finite absolute third moment $\rho$ and standard deviation $\sigma$. The worst case deviation of the cumulative distribution of the true average from the of the approximated Gaussian random variable (via the CLT), also known as the Kolmogorov-Smirnov distance, $KS$, is upper bounded as follows: $KS<\frac{C\rho}{\sqrt{n}\sigma^3},$ where $C<0.4785$ \citep{berryEssenRefinement}. Observe that the quantity $\frac{\rho}{\sigma^3}$ is data dependent. To estimate this quantity, we performed a forward-backward pass for all training samples at the start of each epoch for our four networks considered. The statistics $\rho$ and $\sigma$ were estimated by spatial (over tensors) and sample (over training samples) averages. The maximum value of the ratio $\frac{\rho}{\sigma^3}$ for all gradient tensors was found to be 2.8097. The mini-batch size we used in all our experiments was 256. Hence, we claim that the CLT approximation in Lemmas \ref{lemma::gc} \& \ref{lemma::rqb} is valid in our context up to a worst case Kolmogorov-Smirnov distance of $KS<\frac{0.4785\times 2.8097}{\sqrt{256}} = 0.084.$

\newpage
\section{Additional results on the Minimality and Sensitivity of $C_o$}
\label{supp::fractional}
\begin{figure*}[t] 
\begin{subfigure}[b]{0.36\textwidth}\begin{center}
\includegraphics[height=0.88\textwidth]{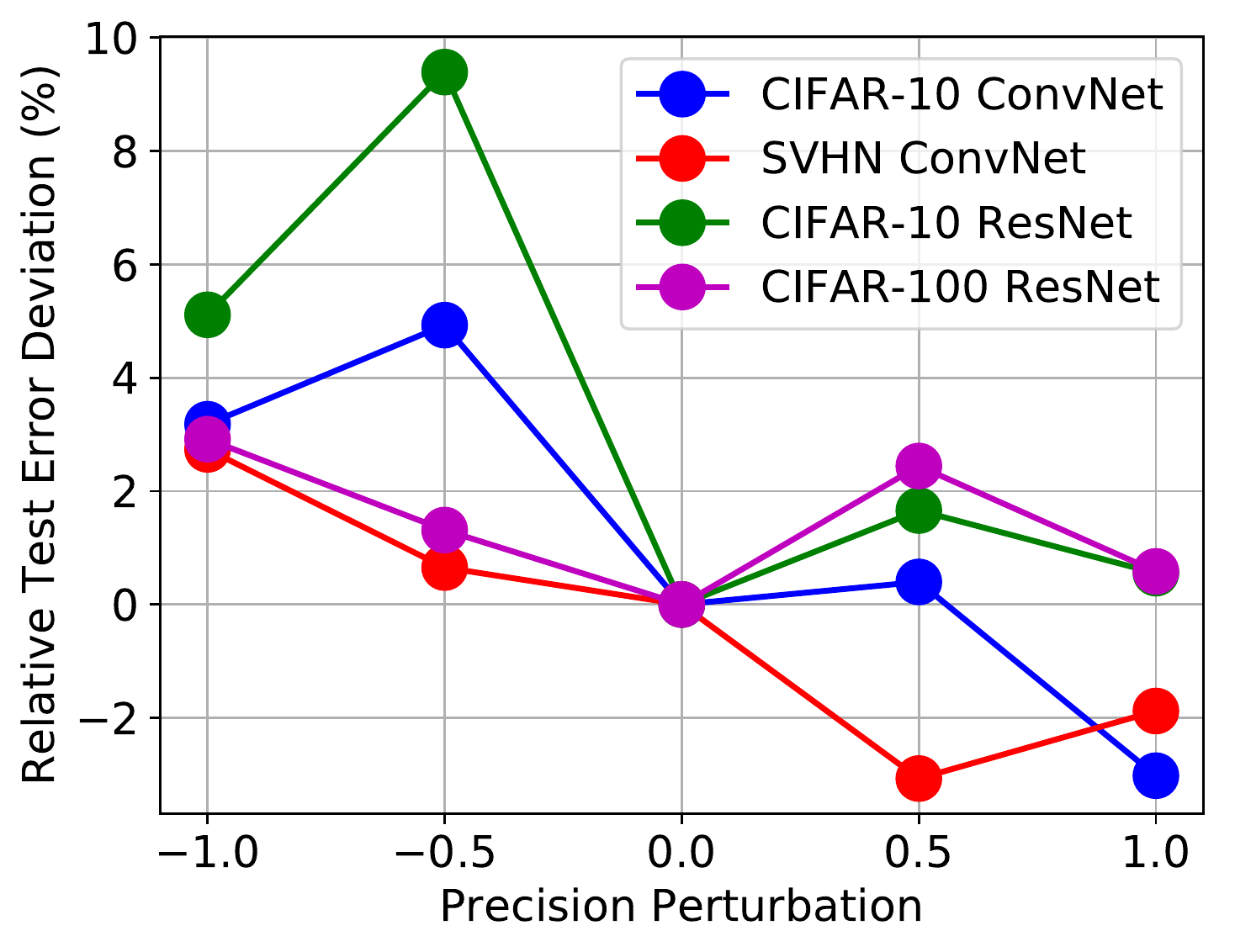}\vspace*{-0.cm}
\caption{}\end{center}
 \end{subfigure}~~ ~~~~~~~~
\begin{subfigure}[b]{0.36\textwidth}\begin{center}
\includegraphics[height=0.88\textwidth]{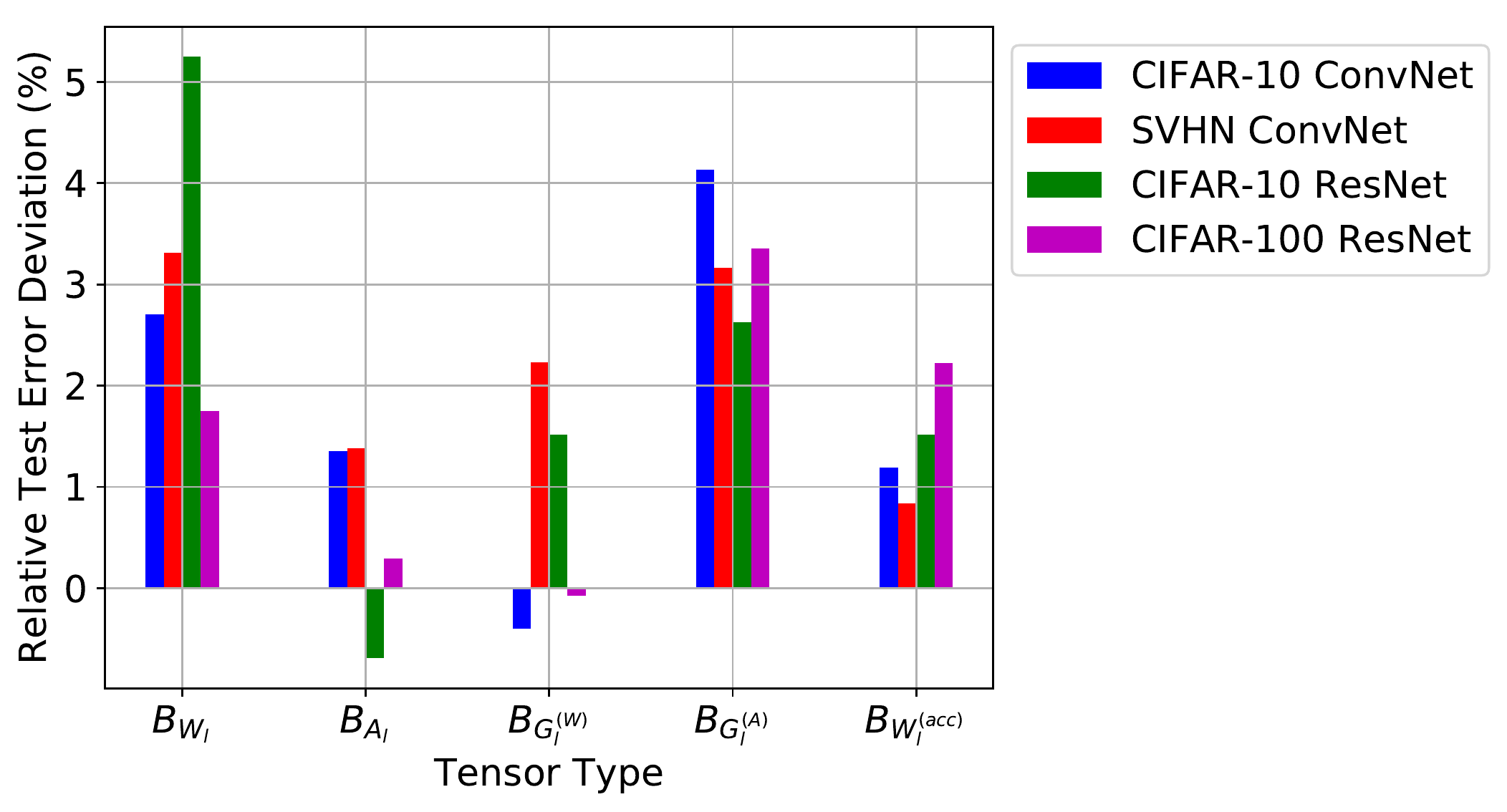}\vspace*{-0.cm}
\caption{}\end{center}
 \end{subfigure} 
\caption{Additional experiments on minimality and sensitivity of $C_o$: relative test error deviation with respect to $C_o$ as a function of (a) random fractional precision perturbations, and (b) 1-b precision reduction per tensor type.} \label{fig::fractional} \end{figure*}
The minimality experiments in the main paper only consider a full 1-b perturbation to the full precision configuration matrix. We further investigate the minimality of $C_o$ and its sensitivity to precision perturbation per tenor type. The results of this investigation are presented in Fig. \ref{fig::fractional}.

First, we consider random fractional precision perturbations, meaning perturbations to the precision configuration matrix where only a random fraction $p$ of the $5L$ precision assignments is incremented or decremented. A fractional precision perturbation of 1 (-1) corresponds to $C_{+1}$ ($C_{-1}$). A fractional precision perturbation of 0.5 (-0.5) means that a randomly chosen half of the precision assignments is incremented (decremented). Figure \ref{fig::fractional} (a) shows the relative test error deviation compared to the test error associated with $C_o$ for various fractional precision perturbations. The error deviation is taken in a relative fashion to account for the variability of the different networks' accuracies. For instance, an absolute 1\% difference in accuracy on a network trained on SVHN is significantly more severe than one on a network trained on CIFAR-100. It is observed that for negative precision perturbations the variation in test error is more important than for the case of positive perturbations. This is further encouraging evidence that $C_o$ is nearly minimal, in that a negative perturbation causes significant accuracy degradation while a positive one offers diminishing returns.

It is also interesting to study which of the $5L$ tensor types is most sensitive to precision reduction. To do so, we perform a similar experiment whereby we selectively decrement the precision of all tensors belonging to the same type (weights, activations, weight gradients, activation gradients, weight accumulators). The results of this experiment are found in Fig. \ref{fig::fractional} (b). It is found that the most sensitive tensor types are weights and activation gradients while the least sensitive ones are activations and weight gradients. This is an interesting finding raising further evidence that there exists some form of duality between the forward propagation of activations and back propagation of derivatives as far as numerical precision is concerned.

\newpage
\section{Illustration of Methodology Usage}
\label{supp::illustration}
We illustrate a step by step description of the application of our precision assignment methodology to the four networks we report results on.
\subsection{CIFAR-10 ConvNet}
\textbf{Feedforward Precisions:} The first step in our methodology consists of setting the feedforward precisions $B_{W_l}$ and $B_{A_l}$. As per Claim \ref{claim::main}, this requires using \eqref{eqn::bal}. To do so, it is first needed to compute the quantization noise gains using \eqref{eqn::qnoise_gains}. Using the converged weights from the baseline run we obtain:
\begin{center}
\begin{tabular}{|c|c|c|c|c|c|}
\hline
Layer Index $l$&1&2&3&4&5\\
\hline
$E_{W_l \rightarrow p_m}$ &1.52E+06&	1.24E+06&	4.21E+06&	3.57E+06&	2.35E+06\\
\hline
$E_{A_l \rightarrow p_m}$&5.51E+04&	3.27E+02&	5.15E+02&	6.60E+02&	7.78E+02\\
\hline
\hline
Layer Index $l$&6&7&8&9&~\\
\hline
$E_{W_l \rightarrow p_m}$ &5.61E+05&	5.97E+04&	3.23E+04&	8.66E+03&~\\
\hline
$E_{A_l \rightarrow p_m}$&7.49E+02&	6.32E+02&	2.37E+02&	9.47E+01&~\\
\hline
\end{tabular}
\end{center}
And therefore, $E_{(\min)}=94.7$ and the feedforward precisions should be set according to \eqref{eqn::bal} as follows:
\begin{center}
\begin{tabular}{|c|c|c|c|c|c|}
\hline
Layer Index $l$&1&2&3&4&5\\
\hline
$B_{W_l}$  &7+$B^{(\min)}$&	7+$B^{(\min)}$&	8+$B^{(\min)}$&	8+$B^{(\min)}$&	7+$B^{(\min)}$
\\
\hline
$B_{A_l}$ &4+$B^{(\min)}$&	1+$B^{(\min)}$&	1+$B^{(\min)}$&	1+$B^{(\min)}$&	2+$B^{(\min)}$
\\
\hline
\hline
Layer Index $l$&6&7&8&9&~\\
\hline
$B_{W_l}$  &6+$B^{(\min)}$&	5+$B^{(\min)}$&	4+$B^{(\min)}$&	3+$B^{(\min)}$&~\\
\hline
$B_{A_l}$ &1+$B^{(\min)}$&	1+$B^{(\min)}$&	1+$B^{(\min)}$&	0+$B^{(\min)}$&
~\\
\hline
\end{tabular}
\end{center}
The value of $B^{(\min)}$ is swept and $p_m$ i evaluated on the validation set. It is found that the smallest value of $B^{(\min)}$ resulting in $p_m<1\%$ is equal to 4 bits. Hence the feedforward precisions are set as follows and as illustrated in Figure \ref{fig::precisions}:
\begin{center}
\begin{tabular}{|c|c|c|c|c|c|c|c|c|c|}
\hline
Layer Index $l$&1&2&3&4&5&6&7&8&9\\
\hline
$B_{W_l}$  &11&	11&	12&	12&	11&	10&	9&	8&	7
\\
\hline
$B_{A_l}$ &8&	5&	5&	5&	6&	5&	5&	5&	4
\\
\hline
\end{tabular}
\end{center}
\textbf{Gradient Precisions:} The second step of the methodology is to determine the precisions of weight $B_{G^{(W)}_l}$ and activation $B_{G^{(A)}_{l+1}}$ gradients. As per Claim \ref{claim::main}, an important statistic is the spatial variance of the gradient tensors. We estimate these variances via moving window averages, where at each iteration, the running variance estimate $\hat{\sigma}^2$ is updated using the instantaneous variance $\tilde{\sigma}^2$ as follows:
\begin{align*}
\hat{\sigma}^2 \leftarrow (1-\theta)\hat{\sigma}^2 + \theta\tilde{\sigma}^2
\end{align*}
where $\theta$ is the running average factor, chosen to be 0.1. The running variance estimate of each gradient tensor is dumped every epoch. Using the maximum recorded estimate and \eqref{eqn::pdrgw} we compute the PDRs of the gradient tensors (as a reminder, the PDR is forced to be a power of 2):
\begin{center}
\begin{tabular}{|c|c|c|c|c|c|}
\hline
Layer Index $l$&1&2&3&4&5\\
\hline
$r_{G^{(W)}_l}$ &5.00E-01&	1.25E-01&	1.25E-01&	1.25E-01&	6.25E-02
\\
\hline
$r_{G^{(A)}_{l+1}}$&4.88E-04&	9.77E-04&	9.77E-04&	1.95E-03&	7.81E-03
\\
\hline
\hline
Layer Index $l$&6&7&8&9&~\\
\hline
$r_{G^{(W)}_l}$ &3.13E-02&	3.13E-02&	1.56E-02&	1.25E-01
&~\\
\hline
$r_{G^{(A)}_{l+1}}$&1.56E-02&	7.81E-03&	7.81E-03&	3.13E-02&~\\
\hline
\end{tabular}
\end{center}
Furthermore, using the minimum recorded estimates of the weight gradient spatial variances and \eqref{eqn::deltagw} we compute the values of the quantization step sizes of the weight tensors:
\begin{center}
\begin{tabular}{|c|c|c|c|c|c|}
\hline
Layer Index $l$&1&2&3&4&5\\
\hline
$\Delta_{G^{(W)}_l}$ &3.91E-03&
1.95E-03&
9.77E-04&
9.77E-04&
4.88E-04
\\
\hline
\hline
Layer Index $l$&6&7&8&9&~\\
\hline
$\Delta_{G^{(W)}_l}$ &2.44E-04&
2.44E-04&
1.22E-04&
4.88E-04
&~\\
\hline
\end{tabular}
\end{center}
Hence the weight gradients precisions $B_{G^{(W)}_l}$ are set as follows and as illustrated in Figure \ref{fig::precisions}:
\begin{center}
\begin{tabular}{|c|c|c|c|c|c|c|c|c|c|}
\hline
Layer Index $l$&1&2&3&4&5&6&7&8&9\\
\hline
$B_{G^{(W)}_l}$   &9&	9&	9&	9&	9&	9&	9&	9&	10
\\
\hline
\end{tabular}
\end{center}
In order to compute the activation gradients precisions, \eqref{eqn::deltagw} dictates that we need the values of largest singular values of the of the square-Jacobians of $G^{(W)}_{l}$ with respect to $G^{(A)}_{l+1}$ for $l=1\ldots L$. The square Jacobians matrices are estimated in a moving window fashion as for the variances above. However, instead of updating a matrix every iteration, the updates are done every first batch of every epoch. The following are the maximum recorded singluar values: 
\begin{center}
\begin{tabular}{|c|c|c|c|c|c|}
\hline
Layer Index $l$&1&2&3&4&5\\
\hline
$\lambda_{G^{(A)}_{l+1}\rightarrow G^{(W)}_{l}}^{(\max)}$ &1.44E+02&
2.37E+02&
4.28E+02&
2.03E+02&
4.20E+01\\
\hline
\hline
Layer Index $l$&6&7&8&9&~\\
\hline
$\lambda_{G^{(A)}_{l+1}\rightarrow G^{(W)}_{l}}^{(\max)}$ &
9.08E+00&
1.37E+01&
1.26E+01&
3.51E+00
&~\\
\hline
\end{tabular}
\end{center}
Using the above values and \eqref{eqn::deltagw} we obtain the values of the quantization step sizes for the activation gradients:
\begin{center}
\begin{tabular}{|c|c|c|c|c|c|}
\hline
Layer Index $l$&1&2&3&4&5\\
\hline
$\Delta_{G^{(A)}_{l+1}}$ &6.10E-05&
3.05E-05&
7.63E-06&
1.53E-05&
1.53E-05
\\
\hline
\hline
Layer Index $l$&6&7&8&9&~\\
\hline
$\Delta_{G^{(A)}_{l+1}}$ &
1.53E-05&
1.53E-05&
7.63E-06&
6.10E-05
&~\\
\hline
\end{tabular}
\end{center}
Hence the activation gradients precisions $B_{G^{(A)}_{l+1}}$ are set as follows and as illustrated in Figure \ref{fig::precisions}:
\begin{center}
\begin{tabular}{|c|c|c|c|c|c|c|c|c|c|}
\hline
Layer Index $l$&1&2&3&4&5&6&7&8&9\\
\hline
$B_{G^{(A)}_{l+1}}$   &5	&8&	9&	9&	11&	12&	11&	11&	11
\\
\hline
\end{tabular}
\end{center}
\textbf{Internal Weight Accumulators Precisions:} By application of \eqref{eqn::internal}, we use the above results to obtain the internal weight accumulator precisions. The only additional information needed is the value of the smallest learning rate value used in the training, which in our case is 0.0001. We obtain the following precisions which are illustrated in Figure \ref{fig::precisions}
\begin{center}
\begin{tabular}{|c|c|c|c|c|c|c|c|c|c|}
\hline
Layer Index $l$&1&2&3&4&5&6&7&8&9\\
\hline
$B_{W^{(Acc)}_{l}}$   &13&	15&	14&	14&	16&18&	19&	21&	20
\\
\hline
\end{tabular}
\end{center}
\subsection{SVHN ConvNet}
\textbf{Feedforward Precisions:} The quantization noise gains are used to obtain values for the precisions as a function of $B^{(\min)}$ as summarized below:
\begin{center}
\begin{tabular}{|c|c|c|c|c|c|}
\hline
Layer Index $l$&1&2&3&4&5\\
\hline
$E_{W_l \rightarrow p_m}$ &3.07E+03&	4.50E+02&	1.54E+03&	1.79E+03&	6.01E+03
\\
\hline
$B_{W_l}$ & 6+$B^{(\min)}$&	5+$B^{(\min)}$&	6+$B^{(\min)}$&	6+$B^{(\min)}$	&7+$B^{(\min)}$\\
\hline
$E_{A_l \rightarrow p_m}$&7.58E+02&	2.86E+00&	7.09E+00&	2.55E+00&	8.33E+00
\\
\hline
$B_{A_l}$ & 5+$B^{(\min)}$&	1+$B^{(\min)}$&2+$B^{(\min)}$&	1+$B^{(\min)}$	&2+$B^{(\min)}$\\
\hline
\hline
Layer Index $l$&6&7&8&9&~\\
\hline
$E_{W_l \rightarrow p_m}$ &	1.25E+03&	7.91E+01&	1.20E+01	&9.13E+00&~\\
\hline
$B_{W_l}$ & 6+$B^{(\min)}$&	4+$B^{(\min)}$&	2+$B^{(\min)}$&	2+$B^{(\min)}$	&~\\
\hline
$E_{A_l \rightarrow p_m}$&	8.18E+00	&1.78E+01	&1.14E+00&	3.90E-01&~\\
\hline
$B_{A_l}$ & 2+$B^{(\min)}$&	3+$B^{(\min)}$&	1+$B^{(\min)}$&	0+$B^{(\min)}$	&~\\
\hline
\end{tabular}
\end{center}
The value of $B^{(\min)}$ is again swept, and it is found that the $p_m<1\%$ for $B^{(\min)}=3$. The feedforward precisions are therefore set as follows and as illustrated in Figure \ref{fig::precisions}:
\begin{center}
\begin{tabular}{|c|c|c|c|c|c|c|c|c|c|}
\hline
Layer Index $l$&1&2&3&4&5&6&7&8&9\\
\hline
$B_{W_l}$  &9&	8&	9&	9&	10&	9&	7&	5&	5
\\
\hline
$B_{A_l}$ &8&	4&	5&	4&	6&	6&	7&	4&	3
\\
\hline
\end{tabular}
\end{center}
\textbf{Gradient Precisions:} The spatial variance of the gradient tensors is used to determine the PDRs and the quantization step sizes of weight gradients. The singular values of the square-Jacobians are needed to determine the quantization step sizes of activation gradients. They were computed as follows:
\begin{center}
\begin{tabular}{|c|c|c|c|c|c|}
\hline
Layer Index $l$&1&2&3&4&5\\
\hline
$r_{G^{(W)}_l}$ &6.25E-02&	1.56E-02&	1.56E-02&	1.56E-02&	1.56E-02	
\\
\hline
$\Delta_{G^{(W)}_l}$ &2.44E-04&
6.10E-05&
6.10E-05&
6.10E-05&
6.10E-05
\\
\hline
$r_{G^{(A)}_{l+1}}$&4.88E-04&	4.88E-04&	9.77E-04&	1.95E-03&	3.91E-03
\\
\hline
$\lambda_{G^{(A)}_{l+1}\rightarrow G^{(W)}_{l}}^{(\max)}$&5.13E+00&
1.48E+02&
3.25E+02&
1.37E+02&
8.84E+01
\\
\hline
$\Delta_{G^{(A)}_{l+1}}$&3.05E-05&
9.54E-07&
9.54E-07&
9.54E-07&
1.91E-06
\\
\hline
\hline
Layer Index $l$&6&7&8&9&~\\
\hline
$r_{G^{(W)}_l}$ &7.81E-03&	3.91E-03&	3.91E-03&	1.56E-02
&~\\
\hline
$\Delta_{G^{(W)}_l}$ &3.05E-05&
1.53E-05&
7.63E-06&
6.10E-05
&~\\
\hline
$r_{G^{(A)}_{l+1}}$&1.56E-02&	3.91E-03&	3.91E-03&	3.13E-02
&~\\
\hline
$\lambda_{G^{(A)}_{l+1}\rightarrow G^{(W)}_{l}}^{(\max)}$&2.20E+01&
9.58E+00&
1.78E+00&
1.71E+00
&~\\
\hline
$\Delta_{G^{(A)}_{l+1}}$&1.91E-06&
9.54E-07&
9.54E-07&
7.63E-06
&~\\
\hline
\end{tabular}
\end{center}
Hence the gradients precisions are set as follows and as illustrated in Figure \ref{fig::precisions}:
\begin{center}
\begin{tabular}{|c|c|c|c|c|c|c|c|c|c|}
\hline
Layer Index $l$&1&2&3&4&5&6&7&8&9\\
\hline
$B_{G^{(W)}_{l}}$   &9	&9&	9&	9&	9&	9&	9&	10&	9
\\
\hline
$B_{G^{(A)}_{l+1}}$   &5	&10&	11&	12&	12&	14&	13&	13&	13
\\
\hline
\end{tabular}
\end{center}
\textbf{Internal Weight Accumulators Precisions:} The smallest learning rate value for this network is 0.001 which results in the following precisions for the internal weight accumulators as illustrated in Figure \ref{fig::precisions}:
\begin{center}
\begin{tabular}{|c|c|c|c|c|c|c|c|c|c|}
\hline
Layer Index $l$&1&2&3&4&5&6&7&8&9\\
\hline
$B_{W^{(Acc)}_{l}}$   &14&  17&  16&  16&  15&  17&  20&  23&  20
\\
\hline
\end{tabular}
\end{center}
\subsection{CIFAR-10 ResNet}
\textbf{Feedforward Precisions:} The quantization noise gains are used to obtain values for the precisions as a function of $B^{(\min)}$ as summarized below:
\begin{center}
\begin{tabular}{|c|c|c|c|c|c|c|c|c|c|c|}
\hline
Layer Index $l$&1&2&3&4&5&6\\
\hline
$E_{W_l \rightarrow p_m}$ &	2.41E+03&	9.80E+02&	1.22E+03&	1.62E+03&	1.52E+03&	3.05E+03
\\
\hline
$B_{W_l}$ &11+$B^{(\min)}$&	10+$B^{(\min)}$&	10+$B^{(\min)}$&	10+$B^{(\min)}$&	10+$B^{(\min)}$&	11+$B^{(\min)}$ \\
\hline
$E_{A_l \rightarrow p_m}$&7.32E-01&	5.15E-01&	1.29E-01&	1.12E-01&	7.31E-02&	8.98E-02\\
\hline
$B_{A_l}$ &5+$B^{(\min)}$&	5+$B^{(\min)}$&	4+$B^{(\min)}$&	4+$B^{(\min)}$&	3+$B^{(\min)}$&	3+$B^{(\min)}$\\
\hline
\hline
Layer Index $l$&7&8&9&10&11&12\\
\hline
$E_{W_l \rightarrow p_m}$ &1.47E+03&	2.15E+03&	2.74E+03&	4.96E+03&	4.23E+03&	4.20E+03	\\
\hline
$B_{W_l}$ &10+$B^{(\min)}$&	11+$B^{(\min)}$&	11+$B^{(\min)}$&	11+$B^{(\min)}$&	11+$B^{(\min)}$&	12+$B^{(\min)}$ \\
\hline
$E_{A_l \rightarrow p_m}$&7.70E-02&	8.39E-02&	6.38E-02&	1.92E-01&	1.54E-01&	1.33E-01\\
\hline
$B_{A_l}$ &3+$B^{(\min)}$&	3+$B^{(\min)}$&	3+$B^{(\min)}$&	4+$B^{(\min)}$&	4+$B^{(\min)}$&	4+$B^{(\min)}$\\
\hline
\hline
Layer Index $l$&13&14&15&16&17&18\\
\hline
$E_{W_l \rightarrow p_m}$ &7.25E+03&	2.99E+03&	2.86E+03&	3.00E+03&	5.02E+03&	4.34E+03\\
\hline
$B_{W_l}$ &11+$B^{(\min)}$&	11+$B^{(\min)}$&	11+$B^{(\min)}$&	11+$B^{(\min)}$&	10+$B^{(\min)}$&	10+$B^{(\min)}$ \\
\hline
$E_{A_l \rightarrow p_m}$&1.13E-01&	8.51E-02&	6.57E-02&	1.29E-01&	6.51E-02&	2.16E-02\\
\hline
$B_{A_l}$ &4+$B^{(\min)}$&	3+$B^{(\min)}$&	3+$B^{(\min)}$&	4+$B^{(\min)}$&	3+$B^{(\min)}$&	2+$B^{(\min)}$\\
\hline
\hline
Layer Index $l$&19&20&21&22&~&~\\
\hline
$E_{W_l \rightarrow p_m}$ &	1.41E+03&	1.30E+03&	1.08E+02&	8.31E+00&~&~\\
\hline
$B_{W_l}$ & 9+$B^{(\min)}$&	7+$B^{(\min)}$&	11+$B^{(\min)}$&	11+$B^{(\min)}$&~&~\\
\hline
$E_{A_l \rightarrow p_m}$&4.80E-03&	7.82E-04&~&~&~&~\\
\hline
$B_{A_l}$ &1+$B^{(\min)}$&	0+$B^{(\min)}$&~&~&~&~
\\
\hline
\end{tabular}
\end{center}
Note that for weights, layer depths 21 and 22 correspond to the strided convolutions in the shortcut connections of residual blocks 4 and 7, respectively. The value of $B^{(\min)}$ is again swept, and it is found that the $p_m<1\%$ for $B^{(\min)}=3$. The feedforward precisions are therefore set as follows and as illustrated in Figure \ref{fig::precisions}:
\begin{center}
\begin{tabular}{|c|c|c|c|c|c|c|c|c|c|c|c|}
\hline
Layer Index $l$&1&2&3&4&5&6&7&8&9&10&11\\
\hline
$B_{W_l}$  &14&	13&	13&	13&	13&	14&	13&	14&	14&	14&	14
\\
\hline
$B_{A_l}$ &8&	8&	7&	7&	6&	6&	6&	6&	6&	7&	7
\\
\hline
\hline
Layer Index $l$&12&13&14&15&16&17&18&19&20&21&22\\
\hline
$B_{W_l}$ 	&15&	14&	14&	14&	14&	13&	13&	12&	10&14&14
\\
\hline
$B_{A_l}$ &7&	7&	6&	6&	7&	6&	5&	4&	3&~&~
\\
\hline
\end{tabular}
\end{center}
\textbf{Gradient Precisions:}  The spatial variance of the gradient tensors is used to determine the PDRs and the quantization step sizes of weight gradients. The singular values of the square-Jacobians are needed to determine the quantization step sizes of activation gradients. They were computed as follows:
\begin{center}
\begin{tabular}{|c|c|c|c|c|c|c|c|c|c|c|}
\hline
Layer Index $l$&1&2&3&4&5&6\\
\hline
$r_{G^{(W)}_l}$ &	2.50E-01&	6.25E-02&	6.25E-02&	3.13E-02&	3.13E-02&	3.13E-02	
\\
\hline
$\Delta_{G^{(W)}_l}$ &2.44E-04&	3.05E-05&	3.05E-05&	3.05E-05&	3.05E-05&	3.05E-05\\
\hline
$r_{G^{(A)}_{l+1}}$&4.88E-04	&	2.44E-04	&	2.44E-04	&	2.44E-04	&	2.44E-04	&	2.44E-04
\\
\hline
$\lambda_{G^{(A)}_{l+1}\rightarrow G^{(W)}_{l}}^{(\max)}$&8.07E+02&
2.84E+03&
2.84E+03&
5.43E+03&
5.43E+03&
4.94E+03
\\
\hline
$\Delta_{G^{(A)}_{l+1}}$&7.63E-06&
4.77E-07&
4.77E-07&
2.38E-07&
2.38E-07&
2.38E-07
\\
\hline
\hline
Layer Index $l$&7&8&9&10&11&12\\
\hline
$r_{G^{(W)}_l}$ &	3.13E-02&	3.13E-02&	3.13E-02&	3.13E-02&	3.13E-02&	3.13E-02
\\
\hline
$\Delta_{G^{(W)}_l}$ &3.05E-05&	3.05E-05&	3.05E-05&	3.05E-05&	3.05E-05&	3.05E-05\\
\hline
$r_{G^{(A)}_{l+1}}$&2.44E-04	&	2.44E-04	&	4.88E-04	&	4.88E-04	&	2.44E-04	&	2.44E-04
\\
\hline
$\lambda_{G^{(A)}_{l+1}\rightarrow G^{(W)}_{l}}^{(\max)}$&4.94E+03&
1.22E+03&
1.22E+03&
1.08E+03&
1.08E+03&
8.07E+02
\\
\hline
$\Delta_{G^{(A)}_{l+1}}$&2.38E-07&
4.77E-07&
4.77E-07&
4.77E-07&
4.77E-07&
9.54E-07
\\
\hline
\hline
Layer Index $l$&13&14&15&16&17&18\\
\hline
$r_{G^{(W)}_l}$ &3.13E-02&	3.13E-02&	1.56E-02&	1.56E-02&	1.56E-02&	1.56E-02	
\\
\hline
$\Delta_{G^{(W)}_l}$ &3.05E-05&	3.05E-05&	1.53E-05&	1.53E-05&	1.53E-05&	1.53E-05\\
\hline
$r_{G^{(A)}_{l+1}}$&2.44E-04	&	4.88E-04	&	4.88E-04	&	4.88E-04	&	2.44E-04	&	2.44E-04
\\
\hline
$\lambda_{G^{(A)}_{l+1}\rightarrow G^{(W)}_{l}}^{(\max)}$&8.07E+02&
1.93E+02&
1.93E+02&
2.98E+02&
2.98E+02&
3.01E+02
\\
\hline
$\Delta_{G^{(A)}_{l+1}}$&9.54E-07&
9.54E-07&
9.54E-07&
4.77E-07&
2.38E-07&
2.38E-07
\\
\hline
\hline
Layer Index $l$&19&20&21&22&~&~\\
\hline
$r_{G^{(W)}_l}$ &	1.56E-02&	1.56E-02&	1.56E-02&	2.50E-01&
~&~
\\
\hline
$\Delta_{G^{(W)}_l}$ &7.63E-06&	7.63E-06&	1.91E-06&	3.05E-05&
~&~\\
\hline
$r_{G^{(A)}_{l+1}}$&2.44E-04	&	6.25E-02	&
~&~&~&~
\\
\hline
$\lambda_{G^{(A)}_{l+1}\rightarrow G^{(W)}_{l}}^{(\max)}$&3.01E+02&
2.32E+01&
~&~&~&~
\\
\hline
$\Delta_{G^{(A)}_{l+1}}$&5.96E-08&
3.81E-06&~&~&~&~
\\
\hline
\end{tabular}
\end{center}
Hence the gradients precisions are set as follows and as illustrated in Figure \ref{fig::precisions}:
\begin{center}
\begin{tabular}{|c|c|c|c|c|c|c|c|c|c|c|c|}
\hline
Layer Index $l$&1&2&3&4&5&6&7&8&9&10&11\\
\hline
$B_{G^{(W)}_{l}}$   &11&	12&	12&	11&	11&	11&	11&	11&	11&	11&	11
\\
\hline
$B_{G^{(A)}_{l+1}}$ &7&	10&	10&	11&	11&	11&	11&	10&	11&	11&	10
\\
\hline
\hline
Layer Index $l$&12&13&14&15&16&17&18&19&20&21&22\\
\hline
$B_{G^{(W)}_{l}}$   &11&	11&	11&	11&	11&	11&	11&	12&	12&	14&	14
\\
\hline
$B_{G^{(A)}_{l+1}}$ &9&	9&	10&	10&	11&	11&	11&	13&	15&~&~
\\
\hline
\end{tabular}
\end{center}
\textbf{Internal Weight Accumulators Precisions:} The smallest learning rate value for this network is 0.001 which results in the following precisions for the internal weight accumulators as illustrated in Figure \ref{fig::precisions}:
\begin{center}
\begin{tabular}{|c|c|c|c|c|c|c|c|c|c|c|c|}
\hline
Layer Index $l$&1&2&3&4&5&6&7&8&9&10&11\\
\hline
$B_{W^{(Acc)}_{l}}$  &9&	13&	13&	13&	13&	12&	13&	12&	12&	12&	12
\\
\hline
\hline
Layer Index $l$&12&13&14&15&16&17&18&19&20&21&22\\
\hline
$B_{W^{(Acc)}_{l}}$   &	11&12&	13&	13&	13&	15&	15&	18&	16&	12&	13
\\
\hline
\end{tabular}
\end{center}
\newpage
\subsection{CIFAR-100 ResNet}
\textbf{Feedforward Precisions:} The quantization noise gains are used to obtain values for the precisions as a function of $B^{(\min)}$ as summarized below:
\begin{center}
\begin{tabular}{|c|c|c|c|c|c|c|c|c|c|c|}
\hline
Layer Index $l$&1&2&3&4&5&6\\
\hline
$E_{W_l \rightarrow p_m}$ &	2.32E+03&	8.23E+02&	1.18E+03&	1.28E+03&	1.70E+03&	2.78E+03
\\
\hline
$B_{W_l}$ &10+$B^{(\min)}$&	9+$B^{(\min)}$&	10+$B^{(\min)}$&	10+$B^{(\min)}$&	10+$B^{(\min)}$&	10+$B^{(\min)}$ \\
\hline
$E_{A_l \rightarrow p_m}$&1.42E+00&	7.84E-01&	2.52E-01&	1.46E-01&	7.68E-02&	7.40E-02\\
\hline
$B_{A_l}$ &5+$B^{(\min)}$&	4+$B^{(\min)}$&	4+$B^{(\min)}$&	3+$B^{(\min)}$&	3+$B^{(\min)}$&	3+$B^{(\min)}$\\
\hline
\hline
Layer Index $l$&7&8&9&10&11&12\\
\hline
$E_{W_l \rightarrow p_m}$ &3.03E+03&	5.80E+03&	7.29E+03&	9.20E+03&	9.81E+03&	1.41E+04
\\
\hline
$B_{W_l}$ &10+$B^{(\min)}$&	11+$B^{(\min)}$&	11+$B^{(\min)}$&	11+$B^{(\min)}$&	11+$B^{(\min)}$&	11+$B^{(\min)}$ \\
\hline
$E_{A_l \rightarrow p_m}$&7.52E-02&	8.70E-02&	1.38E-01&	2.49E-01&	2.11E-01&	1.51E-01\\
\hline
$B_{A_l}$ &3+$B^{(\min)}$&	3+$B^{(\min)}$&	3+$B^{(\min)}$&	4+$B^{(\min)}$&	3+$B^{(\min)}$&	3+$B^{(\min)}$\\
\hline
\hline
Layer Index $l$&13&14&15&16&17&18\\
\hline
$E_{W_l \rightarrow p_m}$ &7.67E+03&	1.40E+04&	1.13E+04&	1.09E+04&	5.35E+03&	3.97E+03
\\
\hline
$B_{W_l}$ &11+$B^{(\min)}$&	11+$B^{(\min)}$&	11+$B^{(\min)}$&	11+$B^{(\min)}$&	11+$B^{(\min)}$&	11+$B^{(\min)}$ \\
\hline
$E_{A_l \rightarrow p_m}$&1.54E-01&	1.09E-01&	1.93E-01&	2.36E-01&	1.27E-01&	3.01E-02\\
\hline
$B_{A_l}$ &3+$B^{(\min)}$&	3+$B^{(\min)}$&	3+$B^{(\min)}$&	4+$B^{(\min)}$&	3+$B^{(\min)}$&	2+$B^{(\min)}$\\
\hline
\hline
Layer Index $l$&19&20&21&22&~&~\\
\hline
$E_{W_l \rightarrow p_m}$ &8.35E+02&	2.30E+01&	6.78E+03&	6.03E+03&	~&~\\
\hline
$B_{W_l}$ & 9+$B^{(\min)}$&	7+$B^{(\min)}$&	11+$B^{(\min)}$&	11+$B^{(\min)}$&~&~\\
\hline
$E_{A_l \rightarrow p_m}$&2.01E-02&	1.80E-03&~&~&~&~\\
\hline
$B_{A_l}$ &2+$B^{(\min)}$&	0+$B^{(\min)}$&~&~&~&~
\\
\hline
\end{tabular}
\end{center}
The value of $B^{(\min)}$ is again swept, and it is found that the $p_m<1\%$ for $B^{(\min)}=3$. The feedforward precisions are therefore set as follows and as illustrated in Figure \ref{fig::precisions}:
\begin{center}
\begin{tabular}{|c|c|c|c|c|c|c|c|c|c|c|c|}
\hline
Layer Index $l$&1&2&3&4&5&6&7&8&9&10&11\\
\hline
$B_{W_l}$  &13&	12&	13&	13&	13&	13&	13&	14&	14&	14&	14
\\
\hline
$B_{A_l}$ &8&	7&	7&	6&	6&	6&	6&	6&	6&	7&	6
\\
\hline
\hline
Layer Index $l$&12&13&14&15&16&17&18&19&20&21&22\\
\hline
$B_{W_l}$ 	&14&	14&	14&	14&	14&	14&	14&	12&	10&14&14
\\
\hline
$B_{A_l}$ &6&	6&	6&	6&	7&	6&	5&	5&	3&~&~
\\
\hline
\end{tabular}
\end{center}
\textbf{Gradient Precisions:}  The spatial variance of the gradient tensors is used to determine the PDRs and the quantization step sizes of weight gradients. The singular values of the square-Jacobians are needed to determine the quantization step sizes of activation gradients. They were computed as follows:
\newpage
\begin{center}
\begin{tabular}{|c|c|c|c|c|c|c|c|c|c|c|}
\hline
Layer Index $l$&1&2&3&4&5&6\\
\hline
$r_{G^{(W)}_l}$ &	5.00E-01&	6.25E-02&	6.25E-02&	3.13E-02&	6.25E-02&	3.13E-02
\\
\hline
$\Delta_{G^{(W)}_l}$ &6.10E-05&	1.53E-05&	1.53E-05&	1.53E-05&	1.53E-05&	1.53E-05\\
\hline
$r_{G^{(A)}_{l+1}}$&2.44E-04&	1.22E-04&	6.10E-05&	6.10E-05&	6.10E-05&	6.10E-05
\\
\hline
$\lambda_{G^{(A)}_{l+1}\rightarrow G^{(W)}_{l}}^{(\max)}$&6.46E+02&
1.86E+03&
1.86E+03&
3.54E+03&
3.54E+03&
5.11E+03
\\
\hline
$\Delta_{G^{(A)}_{l+1}}$&9.54E-07&	1.19E-07&	1.19E-07&	1.19E-07&	1.19E-07&	5.96E-08
\\
\hline
\hline
Layer Index $l$&7&8&9&10&11&12\\
\hline
$r_{G^{(W)}_l}$ &	3.13E-02&	3.13E-02&	3.13E-02&	3.13E-02&	3.13E-02&	3.13E-02
\\
\hline
$\Delta_{G^{(W)}_l}$ &1.53E-05&	1.53E-05&	1.53E-05&	1.53E-05&	1.53E-05&	1.53E-05\\
\hline
$r_{G^{(A)}_{l+1}}$&6.10E-05&	1.22E-04&	1.22E-04&	1.22E-04&	1.22E-04&	1.22E-04
\\
\hline
$\lambda_{G^{(A)}_{l+1}\rightarrow G^{(W)}_{l}}^{(\max)}$&5.11E+03&
1.05E+03&
1.05E+03&
8.23E+02&
8.23E+02&
6.37E+02
\\
\hline
$\Delta_{G^{(A)}_{l+1}}$&5.96E-08&	1.19E-07&	1.19E-07&	2.38E-07&	2.38E-07&	2.38E-07
\\
\hline
\hline
Layer Index $l$&13&14&15&16&17&18\\
\hline
$r_{G^{(W)}_l}$ &	3.13E-02&	3.13E-02&	1.56E-02&	3.13E-02&	1.56E-02&	1.56E-02
\\
\hline
$\Delta_{G^{(W)}_l}$ &	1.53E-05&	7.63E-06&	7.63E-06&	7.63E-06&7.63E-06&	7.63E-06\\
\hline
$r_{G^{(A)}_{l+1}}$&1.22E-04&	1.22E-04&	2.44E-04&	1.22E-04&	6.10E-05&	6.10E-05
\\
\hline
$\lambda_{G^{(A)}_{l+1}\rightarrow G^{(W)}_{l}}^{(\max)}$&6.37E+02&
2.31E+02&
2.31E+02&
2.79E+02&
2.79E+02&
2.80E+02
\\
\hline
$\Delta_{G^{(A)}_{l+1}}$&2.38E-07&	2.38E-07&	2.38E-07&	1.19E-07&	1.19E-07&	1.19E-07
\\
\hline
\hline
Layer Index $l$&19&20&21&22&~&~\\
\hline
$r_{G^{(W)}_l}$ &	1.56E-02&	6.25E-02&3.13E-02&	1.56E-02&
~&~
\\
\hline
$\Delta_{G^{(W)}_l}$ &	3.81E-06&	7.63E-06&1.53E-05&	7.63E-06&
~&~\\
\hline
$r_{G^{(A)}_{l+1}}$&6.10E-05&	7.81E-03&
~&~&~&~
\\
\hline
$\lambda_{G^{(A)}_{l+1}\rightarrow G^{(W)}_{l}}^{(\max)}$&2.80E+02&
7.81E+01&
~&~&~&~
\\
\hline
$\Delta_{G^{(A)}_{l+1}}$&5.96E-08&	2.38E-07&
~&~&~&~
\\
\hline
\end{tabular}
\end{center}
Hence the gradients precisions are set as follows and as illustrated in Figure \ref{fig::precisions}:
\begin{center}
\begin{tabular}{|c|c|c|c|c|c|c|c|c|c|c|c|}
\hline
Layer Index $l$&1&2&3&4&5&6&7&8&9&10&11\\
\hline
$B_{G^{(W)}_{l}}$   &14&	13&	13&	12&	13&	12&	12&	12&	12&		12&12
\\
\hline
$B_{G^{(A)}_{l+1}}$ &9&	11&	10&	10&	10&	11&	11&	11&	11&	10&	10
\\
\hline
\hline
Layer Index $l$&12&13&14&15&16&17&18&19&20&21&22\\
\hline
$B_{G^{(W)}_{l}}$   &	12&	12&	13&	12&		13&	12&	12&	13&	14&	12&	12
\\
\hline
$B_{G^{(A)}_{l+1}}$ &	10&	10&	10&	11&	11&	10&	10&	11&	16&~&~
\\
\hline
\end{tabular}
\end{center}
\textbf{Internal Weight Accumulators Precisions:} The smallest learning rate value for this network is 0.001 which results in the following precisions for the internal weight accumulators as illustrated in Figure \ref{fig::precisions}:
\begin{center}
\begin{tabular}{|c|c|c|c|c|c|c|c|c|c|c|c|}
\hline
Layer Index $l$&1&2&3&4&5&6&7&8&9&10&11\\
\hline
$B_{W^{(Acc)}_{l}}$  &12&	15&	14&	14&	14&	14&	14&	13&	13&	13&	13
\\
\hline
\hline
Layer Index $l$&12&13&14&15&16&17&18&19&20&21&22\\
\hline
$B_{W^{(Acc)}_{l}}$   &	13&	13&	14&	14&	14&	14&	14&	17&	18&	13&	14
\\
\hline
\end{tabular}
\end{center}
\fi

\end{document}